\newtheorem{theorem}{Theorem}
\newtheorem{corollary}{Corollary}
\newtheorem{proposition}{Proposition}
\newtheorem{claim}{Claim}
\newtheorem{lemma}{Lemma}
\newcommand{\Expof}{\mathbb{E}}
\newcommand{\ELBO}{\mathcal{L}}
\newcommand{\DOM}{\mathcal{D}}
\newcommand{\betahat}{\hat{\beta}}
\newcommand{\pd}{\partial}
\newcommand\supp{\mathop{\rm supp}}
\title{Linking Sequences of Events with Sparse or \\No Common Occurrence across Data Sets}
\author{
	Yunsung Kim \\
	Department of Computer Science \\
	Columbia University \\
	\texttt{yunsung.kim@columbia.edu}
}
\begin{document}

\maketitle

\begin{abstract}

Data of practical interest - such as personal records, transaction logs, and medical histories - are sequential collections of events relevant to a particular source entity. Recent studies have attempted to link sequences that represent a common entity across data sets to allow more comprehensive statistical analyses and to identify potential privacy failures. Yet, current approaches remain tailored to their specific domains of application, and they fail when co-referent sequences in different data sets contain sparse or no common events, which occurs frequently in many cases.\\

To address this, we formalize the general problem of \emph{sequence linkage} and describe \emph{LDA-Link}, a generic solution that is applicable even when co-referent event sequences contain no common items at all. LDA-Link is built upon \emph{Split-Document} model, a new mixed-membership probabilistic model for the generation of event sequence collections. It detects the latent similarity of sequences and thus achieves robustness particularly when co-referent sequences share sparse or no event overlap. We apply LDA-Link in the context of social media profile reconciliation where users make no common posts across platforms, comparing to the state-of-the-art generic solution to sequence linkage.
\end{abstract}

\section{INTRODUCTION}
Given multiple independent data sources, it is desirable to link representations of identical entries to allow valuable statistical analyses or to save clerical efforts of identification. When the identifying attributes of the objects are absent, sanitized, or prone to error, resolving identity uncertainty becomes a highly non-trivial task. State-of-the-art solutions to identity uncertainty can effectively associate co-referent records across relational databases (record linkage) or dissimilar representations of the same object (author identification, noun coreference, image association), and they have widespread applications to reference matching \cite{lawrence1999autonomous, mccallum2000efficient, mccallum2004conditional, pasula2003identity}, public policy making \cite{sadinle2014detecting, sadinle2013generalized, jaro1989advances}, behavioral analysis \cite{ji2014structure, kazemi2015growing, narayanan2009anonymizing}, biomedical science \cite{christen2002febrl}, and database cleansing \cite{bhattacharya2007collective}.

Most data of practical interest today, however, is a collection in which each entry is a \emph{sequence} of events generated by an entity. This happens ubiquitously, especially when these entries reflect individual traits: shopping services keep track of different consumption histories, location-based applications maintain geo-tagged records of an individual’s whereabouts, and medical records and prescription histories contain a sequence of health-related incidents. Sequences collected in relevant domains reflect the common features of the same entity (e.g., consumption preferences, locational trajectories, or health conditions), which can provide evidence for matching ones that refer to a common source.

Although these sequences share an abstract generative pattern, their domain disparity prevents them from sharing rich common events. For example in many mobility-related applications, especially with granular records of time and location, one type of a spatiotemporal event (such as a phone call) is never guaranteed to occur in tandem with another (a credit card transaction). This poses a critical challenge for sequence linkage and also distinguishes the problem of sequence linkage from the previously studied problems of identity uncertainty. While the entries being matched in the latter consist of distortions or alternate forms of “ground-truth” features such as canonical names, objective measurements, or semantic definitions, no such ground-truth exists in sequence linkage so the solutions to these problems are crucially inapplicable. In addition, frequent abscence of common events makes sequence linkage an even more intricate task.

In this work, we present the simplest form of a generically applicable Bayesian framework that addresses the issues of rare common occurrence in cross-domain sequence linkage. This framework consists of a mixed-membership model for the generation of event sequences whose source entities are shared across data sets (Split-Document model), and a 3-phase unsupervised algorithm for inferring their identities across data sets (LDA-Link). Split-Document frames each event incident in terms of semantic “motifs,” and LDA-Link uses this characterization to determine the semantic similarities of a pair of views, ensuring robust linkage even when the co-referent sequence pairs share no common events at all. 

To validate the empirical robustness of LDA-Link against common occurrence sparsity, we also provide a case study with real-world geo-tagged data sets. Mobility-related data are the richest and the most omnipresent type of data available through numerous location-based smart phone applications and other external services such as social media, cellular logs, and credit card transaction histories. As diverse as these applications are, collections from one application are often independent from collections from another and their locations seldom coincide, which makes them a challenging target for sparsity-robust sequence linkage. LDA-Link achieves up to 37\% identifiability when linking profiles with no common posts across Instagram and Twitter, significantly outperforming the current state-of-the-art generic solution to sequence linkage.

The paper is organized as follows. Section~\ref{sec:relworks} presents an overview of relevant works and formalizes the problem of sequence linkage. Section~\ref{sec:lda_vb} reviews Latent Dirichlet Allocation and its inference methods that are essential to the development of our work. Section~\ref{sec:algo} presents Split-Document model and LDA-Link. In Section~\ref{sec:eval} we perform a case study of this method in the context of social media identity reconciliation, and validate its robustness against sparsity. We conclude our paper in Section~\ref{sec:concl}. The Appendix analyzes each step of the algorithm in theory and studies its convergence properties as well as its effectiveness. 

\section{RELATED WORKS AND PROBLEM FORMULATION}

\label{sec:relworks}

\subsection{PREVIOUS WORKS}

Sequence linkage belongs to the class of problems on identity uncertainty. Existing solutions to identity uncertainty are customized to linking representations of an entity with ``ground-truth'' field values such as canonical names, unambiguous semantics, numerical features, or objective measurements. Deterministic association can be done to declare exact matches on these features when error-free identifiers are available, but entries in large data sets tend to be prone to noise and distortions such as human inscription or measurement errors, use of alternative forms, subjective observations, or deliberate data sanitization.

In the absence of trustworthy identifiers, probabilistic methods can be used to address uncrontrollable noise. The seminal work of \cite{fellegi1969theory} provided the first probabilistic framework for linking records in relational databases that refer to the same entity (record linkage) based on agree/disagree match statuses of each field. Methods using generalized Expectation-Minimization \cite{winkler2000frequency, winkler1993improved}, scoring \cite{thibaudeau1993discrimination}, or Gibbs sampling \cite{meng1993maximum, larsen2001iterative} for parameter estimation have been developed to overcome the assumption of conditional independence in \cite{fellegi1969theory}. \cite{winkler2000frequency, winkler1988using} suggests a similar method that uses relative frequencies of the field values in place of dichotomous match statuses of individual fields to determine the weight parameters for linkage. The downside of the aforementioned family of models is that they disregard the generative patterns of each observation and thus suffer the loss of evidence contained in the actual values of the noisy observations.

In this regard, Bayesian methods have the advantage of naturally handling noises present in observations. By incorporating uncertainty into a generative process, Bayesian approaches allow the computation of match probabilities conditioned on the actual observations without neglecting the presence of simultaneous matches \cite{fortini2001bayesian}. Recent solutions to identity uncertainty have enjoyed the improvement of Bayesian inference techniques, and these solutions can largely be classified into ``parametric,'' ``cluster-based,'' or ``correlational'' depending on their generative assumptions and the criteria for inferring the co-reference structure.

The parametric family of methods is the most prominent, which encodes the latent linkage structure into a parameter of interest - such as matching matrices \cite{fortini2001bayesian, steorts2015entity, liseo2013some, tancredi2011hierarchical, steorts2014smered, steorts2015bayesian} or co-reference partitions \cite{sadinle2013generalized, sadinle2014detecting} - of a probabilistic generative model. Models in \cite{tancredi2011hierarchical} and \cite{liseo2011bayesian} represent the linkage structure as a one-hot matching matrix, and co-referent records separately undergo ``hit-miss'' distortions of latent ``true'' categorical or continous normal attributes. Similarly using matching matrices, \cite{steorts2014smered, steorts2015bayesian} provide a unified framework for record linkage and de-duplication that can be extended to simultaneously linking records across multiple files. \cite{sadinle2013generalized} developed a block-partitioning method to find co-referent partitions across multiple files under a normal mixture model, and \cite{jaro1989advances} dvelops on this model to schieve de-duplication. Apart from the generative models mentioned above, discriminative models such as 

Cluster-based methods bind similar objects into clusters, and these methods are more widely used in other branches of identity uncertainty such as author identification, text classification and noun coreference for which co-occurrence is well explained by close-knit groups of relevance. \cite{pasula2003identity} applies the concept of identity clustering to find the sample posterior over all relationships between objects, classes, and attributes modeled with the Relational Probability Model (RPM). \cite{bhattacharya2006latent} proposes an LDA-based model for the generation of author and citation entries in which authors and publications in the same membership group are more frequently observed together. Non-parametric Bayesian Dirichlet processes (DP) allow the number of such groups to be flexible, and \cite{dai2011grouped} applies DP to modeling groups of authors associated with topics. 

Correlational methods, on the other hand, compute the statistical interrelations of each pair of records rather than attempting to infer the linkage structure a posteriori. \cite{klami2013bayesian} uses covariance matrix of a correlational, multivariate normal generative model as a measure of statistical dependency used for finding the objects with the same identity. 

The aforementioned methods suffer from two critical downfalls when applied to cross-domain sequence linkage. First and foremost, these methods are tailored to the treatment of objects with unambiguous ground-truth features such as relational records with categorical, string-valued, or continuous attributes. As mentioned previously, this makes these methods fundamentally inapplicable to modeling event sequences. Also, the failure to isolate the unknown linkage structure when estimating hidden model parameters adds an extra layer of uncertainty which can compromise both the accuracy and convergence rate of the whole process. In addition, the mixture assumptions in many of the above methods are more ill-suited for sequence linkage than mixed-membership models since each entity exhibits a unique pattern of event generation in reailty while a mixture assumption binds them to restrictive patterns.

One notable method that particularly aims at sequence linkage is \cite{unnikrishnan2015asymptotically}, which computes the distance of two sequences in the simplex space of empirical distributions. Yet, as our case study in Section~\ref{sec:eval} reveals, this method fails when the empirical distributions have sparse intersections. In contrast, we propose a method that analyzes the semantics of each event incident, and determines the similarities of a pair of views within the latent semantic space, allowing for a more macroscopic pattern analysis. This method resolves all of the above-mentioned downfalls of existing methods through an information-theoretic interpretation of Latent Dirichlet Allocation as a mechanism for dimension reduction.

\subsection{PROBLEM FORMULATION}
\label{sec:prob_form}

Assume a world of $D$ real-world entities, $E=\{E_{d\in[D]}\}$, where $[D] = \{1,2,...,D\}$. $D_X$ and $D_Y$ each denotes the size of two data sets $X=\{X_{i\in[D_X]}\}$ and $Y=\{Y_{j\in[D_Y]}\}$. (We assume that $D=D_X$.) Entity $E_d$ generates exactly one sequence of events $X_d$ in the $X$ data set, and one or more sequences $Y_{j\in\pi(d)}$ in the $Y$ data set. Here, the function $\pi: [D_X]\rightarrow \mathcal{P}([D_Y])$ is the ``identity'' association, which indicates that sequence $X_i$ and sequences $Y_{j\in\pi(i)}$ are generated by the same real-world entity, namely $E_i$, and such sequences in different data sets are said to be `linked' or `co-referent.' ($\mathcal{P}([D_Y])$ is the set of all subsets of $[D_Y]$, and $\pi$ is a function whose image forms a partition of the set $[D_Y]$.) 

The sequence generation proceeds as follows. Each entity $E_{d\in[D]}$ performs a sequence of one of $W$ possible actions. (This categorical assumption can be easily extended to the continuous case.) When an action $w\in[W]$ is performed by etity $E_d$, this appends an entry $(w)$ in exactly one of the sequences among $X_d$ and $Y_{j\in\pi(d)}$. Every sequence is therefore an ordered collection of events, which is equivalent to the concept of a ``word'' in information theory and ``document'' in topic modeling. We will call this sequence, ``view.'' See Figure~\ref{fig:problem} for a visual understanding.

\begin{figure}[h!]
\centering
\makebox[\textwidth]{
\begin{tabular}{cc}
	\includegraphics[width=0.5\textwidth]{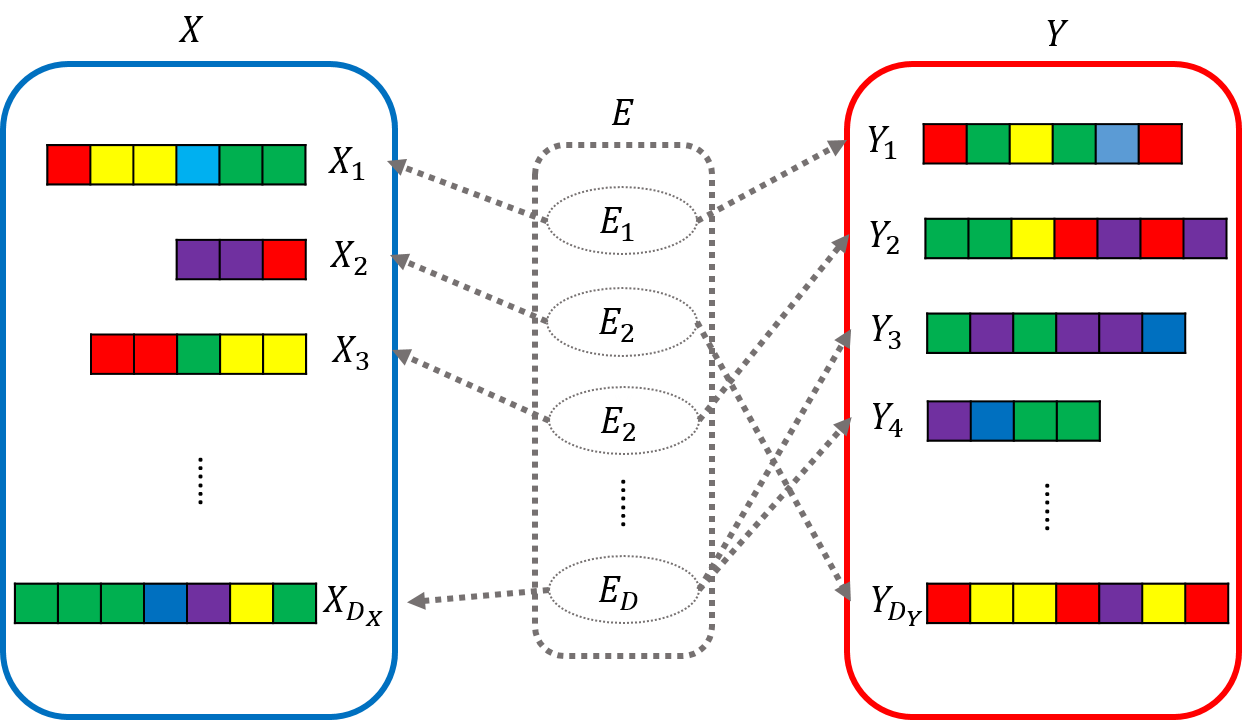} &
	\includegraphics[width=0.4\textwidth]{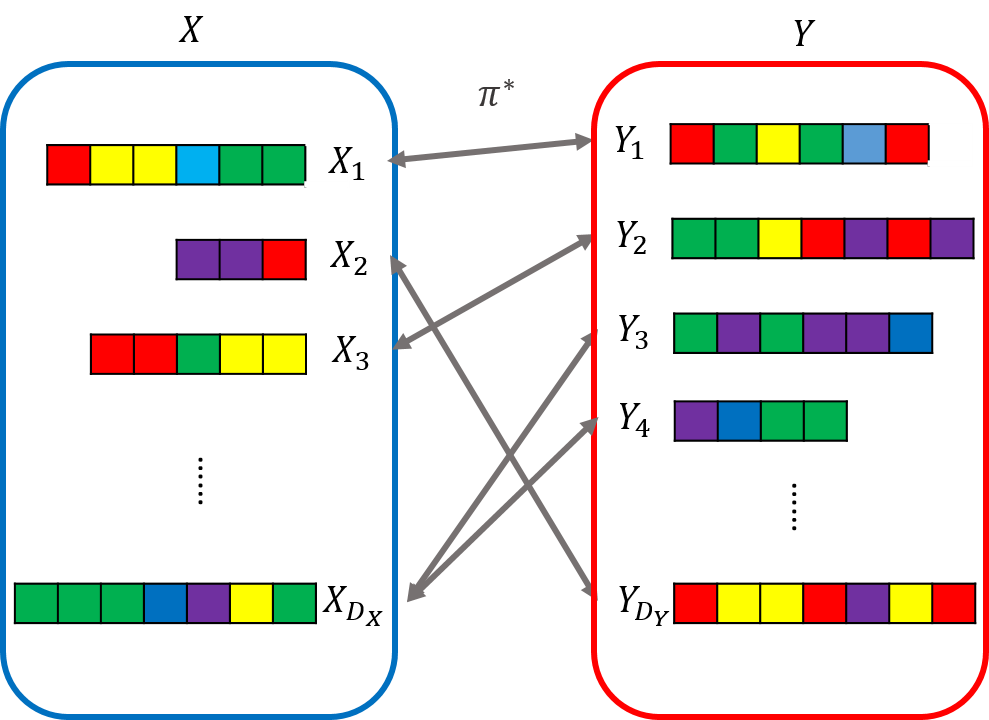} \\
\begin{tabular}{c}
(a) Generation of Views: \\ Hidden entities generate sequences across data sets.
\end{tabular}
&
\begin{tabular}{c}
(b) Sequence Linkage: \\ $\pi^*$ is the true linkage structure
\end{tabular}
\end{tabular}
}
\caption{Description of View Generation and the Linkage Problem. (Colors represent events.)}
\label{fig:problem}
\end{figure}

A typical problem of identity uncertainty is to find the exact identity mapping $\pi^*$. When the data sets $X$ and $Y$ consist of an identical number of entries, this becomes a well-known problem of finding the optimal bipartite graph matching between the $X$ and $Y$ views. When the data sets $X$ and $Y$ differ in size, however, graph matching methods are no longer straighforwardly applicable. In this paper, we study instead the problem of finding for each entity a set of up to $k$ candidates for a modest choice of $k$, which is a more realistic approach in cases where the number of co-referent target views are unknown or different for each entity. Although the candidate sets will no longer be mutually exclusive, this method has the additional advantage of preventing close misses at the slight cost of preciseness. We will call this type of matching ``Rank-$k$'' matching since it associates a view with $k$ supposedly most relevant candidate views.

\paragraph{Notations} 

In Sections~\ref{sec:algo} and the Appendix, we use $i$ and $j$ to index views in the $X$ and $Y$ domain data sets without a specific knowledge or reference to its real-world entity. To refer to a view in relation to a specific entity, we use the index $d$ so that $X_d$ and $Y_{\pi(d)}$ identify the $X$ and $Y$ views generated by entity $E_d$. Sometimes we will use $R$ to indicate an arbitrary view regardless of its domain. Although a view is a sequence of events, Split-Document model is a bag-of-words model in which events are exchangeable as we shall see in the later sections. We can thus represent a view as a vector of event frequencies and use the subsecripts $R_w$ (or equivalently, $X_{i,w}$, $Y_{j,w}$) to refer to the frequency of the event $w$ in view $R$ (or $X_i$, $Y_j$). Sometimes we may want to normalize the frequency vector to create a vector of relative frequencies (or an empirical distribution.) We denote this normalized relative frequency vector by $P$, and use superscripts such as $P^X$ and $P^Y$ to specify its domain. Lastly, we index topics with the index $k$, and each topic is a probability distribution over the events $w\in[W]$.

\section{LATENT DIRICHLET ALLOCATION AND VARIATIONAL BAYES}

\label{sec:lda_vb}

The proposed sequence linkage framework is closely related to Latent Dirichlet Allocation (LDA)\cite{blei2003latent}. Following subsections will review the LDA and the algorithms used for its statistical inference.

LDA is the simplest topic model, a Bayesian probabilistic model for generating documents. Each topic $\beta_k$ is a probability distribution over the vocabulary space. A total of $K$ topics are assumed, and each topic is drawn from a Dirichlet prior, $\beta_k\sim Dirichlet(\eta)$. Given these topics, each of the $D$ documents is generated in the following way. First a ``topic proportion'' $\theta_d\sim Dirichlet(\alpha)$ is drawn. Then for each word $w_{d,i\in[N]}$, a topic index $z_{d,i}$ is drawn according to the topic proportion, $z_{d,i}\sim\theta_d$, and $w_{d,i}\sim\beta_{z_{d,i}}$. Therefore the complete joint probability distribution becomes
\begin{equation}
	p(w,z,\theta,\beta) = p(\beta | \eta) \prod_d \left( p(\theta_d | \alpha) \prod_i \left( p(z_{d,i} | \theta_d) p(w_{d,i} | \beta_{z_{d,i}} ) \right) \right),
	\label{eq:joint}
\end{equation}
whose graphical model is shown in Figure~\ref{fig:lda}

\begin{figure}[h!]
\centering
\includegraphics[width=0.7\textwidth]{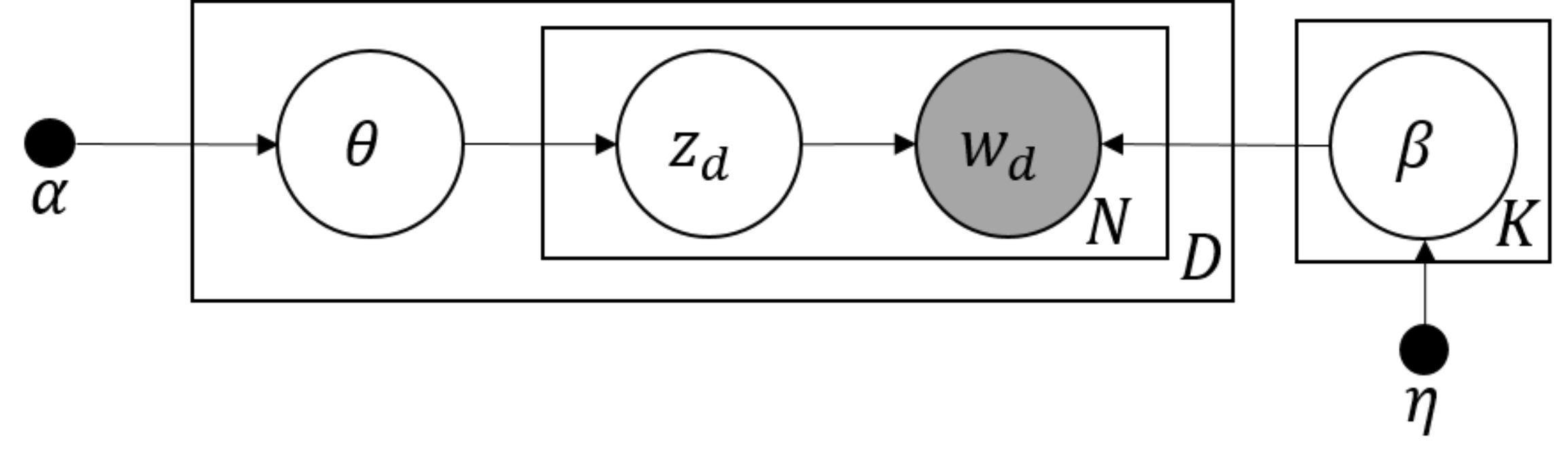}
\caption{Latent Dirichlet Allocation}
\label{fig:lda}
\end{figure}

Exact MAP inference on LDA is infeasible\cite{blei2003latent}, and usually one resorts to approximate inference techniques such as the MCMC and variational inference. In our approach, a particular variant of the variational inference is used, namely the stochastic variational inference\cite{hoffman2013stochastic}.

\subsection{VARIATIONAL BAYESIAN INFERENCE FOR LDA}

Variational Bayesian (VB) methods approximate an intractable posterior distribution to a member of a family of simpler distributions. In particular, it attempts to find the distribution $q^*(z,\theta,\beta)\in\mathcal{Q}$ that is closest in KL divergence to the true posterior distribution $p(z,\theta,\beta|w,\alpha,\eta)$,
\[
	q^* = \arg\min_{q \in \mathcal{Q}} KL\left(q(z,\theta,\beta) \| p(z,\theta,\beta| w)\right)
\]
where the conditioning hyperparameters in $p$ are omitted for simplicity. The simplest and most frequently utilized family $\mathcal{Q}$ is the ``mean-field'' family, which is a family of product distributions that is factorized into each latent variable term:
\begin{equation}
	q(z,\theta,\beta) = \prod_k q(\beta_k) \left( \prod_d q(\theta_d) \prod_i q(z_{d,i})\right)
	\label{eq:q}
\end{equation}
where $q(z_{d,i}=k)=\phi_{d,w_{d,i},k}$, $q(\theta_d) = Dir(\theta_d; \gamma_d)$, and $q(\beta_k) = Dir(\beta_k; \lambda_k)$ for conjugacy.

Instead of attempting to minimize the KL divergence directly, which involves the intractable distribution $p$, one may use the follwing relation
\begin{align*}
	KL(q \| p(z,\theta,\beta|w)) = - \ELBO(q) + \log p(w) = - \ELBO(q) + const
\end{align*}
where $\ELBO(q) = \Expof_q[\log q(z,\theta,\beta)] - \Expof_q[\log p(w, z,\theta,\beta)]$, and maximize $\ELBO(q)$ instead. $\ELBO(q)$ is called the Evidence Lower BOund (ELBO), which is conceptually the lower bound on $\log p(w)$ given by Jensen's Inequality. Early implementations of variational inference used the method of coordinate ascent that iteratevely maximizes the ELBO for each variational parameter while keeping others fixed \cite{wainwright2008graphical}. Although this method guarantees local convergence, it requires batch updates that become costly for large corpora, which led to the development of the online variational bayes that uses stochastic gradient descent for faster convergence \cite{hoffman2010online}.

\subsection{STOCHASTIC GRADIENT DESCENT AND ONLINE VB FOR LDA}

We first briefly discuss the nature of stochastic gradient descent before discussing online VB. Stochastic gradient descent (SGD) optimizes an object function when only the noisy estimates of the true gradient is available \cite{bottou2004stochastic}. Given an object function of the form $C(w) \triangleq \Expof{[Q(z,w)]}\triangleq{\int Q(z,w) dP(z)}$, SGD applies the following update formula 
\[
	w_{t+1} = w_t - \gamma_t H(z_t, w_t)
\]
to find the optimal value of $w$, where $z_t$ is an event from distribution $P(z)$ and $\gamma_t$ is the learning rate. The update term $H(z,w)$ satisfies the condition 
\[
	\Expof_z{[H(z,w)]} = \nabla_w C(w),
\]
and thus can be understood as a noisy yet consistent estimate of the true gradient of $C$ at $w$. It is shown that $w_t$ converges almost surely to the local optimum \cite{bottou1998online}.

\paragraph{Stochastic Maximization of the ELBO}

SGD can be used to optimize the ELBO for LDA \cite{hoffman2013stochastic}\footnote{SGD can be applied in VB for any generative model that involves local and global latent variables \cite{hoffman2013stochastic}.}, $\ELBO(Q) \triangleq \Expof{\left[\log \frac{q(z,\theta,\beta)}{p(w,z,\theta,\beta}\right]}$. Considering the factorization of $p$ and $q$ from Equations~\ref{eq:joint} and~\ref{eq:q},
\[
	\ELBO(Q) = \Expof_q{\left[\log \frac{p(\beta|\eta)}{q(\beta)} \right]} + 
	\sum_d \left( 
		\Expof_q{\left[ \log \frac{p(\theta_d | \alpha)}{q(\theta_d)} \right]}  + 
		\Expof_q{\left[ \log \frac{p(w_d, z_d | \theta_d, \beta)}{q(z_d)} \right]}
	\right).
\]

Letting $I\sim \mbox{Unif}(1,...,D)$ be a random variable that chooses an index $i$ over the document indices $[D]$, we can rewrite $\ELBO(Q)$ as $\Expof_I\left[\ELBO_I(Q)\right]$ where
\[
	\ELBO_I(Q) = 
		D \Expof_q{\left[\log \frac{p(\beta|\eta)}{q(\beta)} \right]} + 
		\Expof_q{\left[ \log \frac{p(\theta_d | \alpha)}{q(\theta_d)} \right]}  + 
		\Expof_q{\left[ \log \frac{p(w_d, z_d | \theta_d, \beta)}{q(z_d)} \right]},
\]
so that the ``natural gradient'' of $\ELBO_I$ with respect to each global variational parameter $\eta$ is a noisy yet unbiased estimate of the natural gradient of the variational objective, $\ELBO$. Computing the natural gradient instead of the Euclidean gradient corrects for the geometry of the space of the variational parameters by using the symmetrized KL divergence as the measure of spatial distance \cite{amari1998natural} and thus leads to a more effective convergence to the local optimum \cite{hoffman2013stochastic}. Refer to \cite{hoffman2010online} for the resulting online variational Bayesian algorithm for LDA.

\section{SPLIT-DOCUMENT MODEL AND LDA-Link ALGORITHM}
\label{sec:algo}

We now introduce \emph{Split-Document} model, a simple probabilistic generative model for co-referent views that is based on the LDA. Based on this model we suggest \emph{LDA-Link} as a solution for identifying co-referent views across data sets of different domains.

\subsection{SPLIT-DOCUMENT MODEL}

\begin{figure}[h!]
\centering
\includegraphics[width=0.7\textwidth]{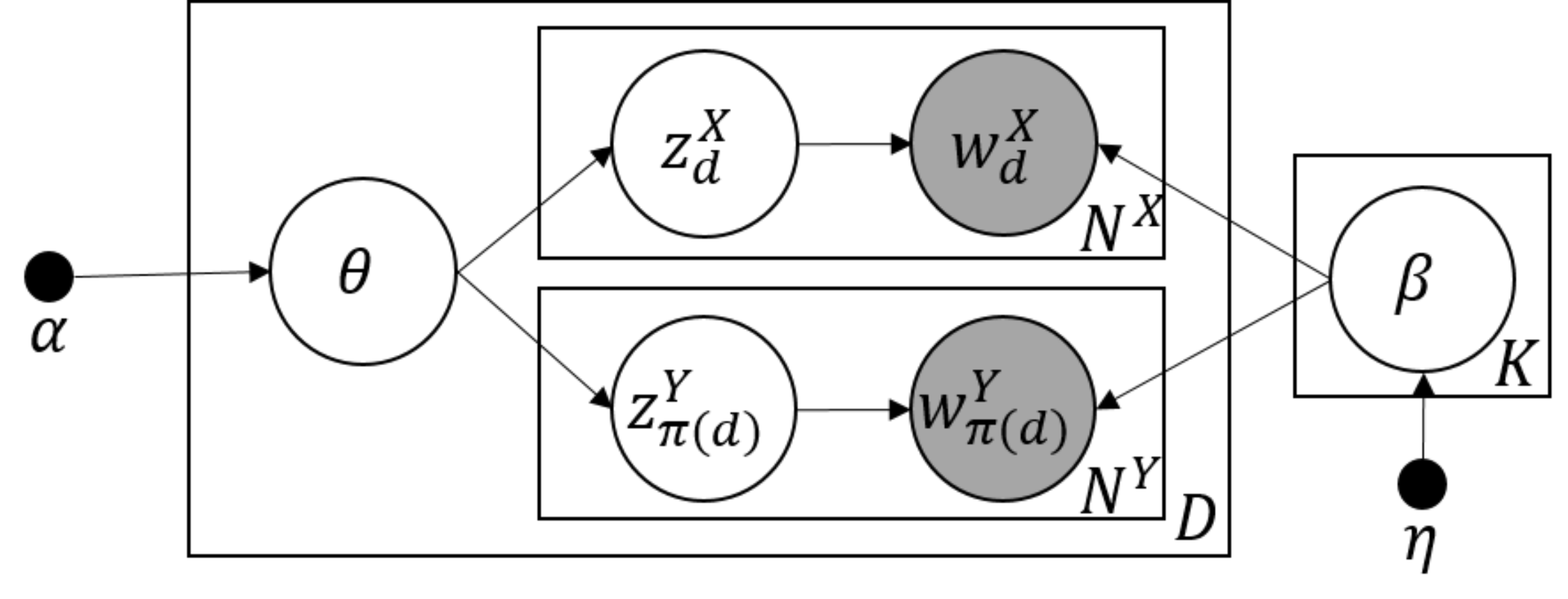}
\caption{Split-Document Model}
\label{fig:split_doc}
\end{figure}

The Split-Document model displayed in Figure~\ref{fig:split_doc} extends LDA to model the generation of views across domains. In this model, a real-world entity generates a sequence of i.i.d. events according to LDA, and each event falls into exactly one among multiple co-referent views. (For now we assume that each entity generates a single view in each data set, and that views in the smae data set have a fixed number of events.) A set of co-referent views are thus generated through a mixture model whose mixture proportions depend on the sequence-generating entity.

Although the Split-Document model is kept simple in this paper for the sake of lucid presentation, its assumptions can be generalized to allow more than two data sets with each entity generating more than one views in each data set and events ocurring in different data sets with different probabilities. As in \cite{steorts2014smered, steorts2015bayesian}, this approach would combine the problem of record linkage with deduplication.

\subsection{LDA-Link ALGORITHM}

We now introduce LDA-Link, a co-reference linkage algorithm based on the Split-Document model. The key idea is to consider topic proportions as a reduction of dimensionality from the size of the entire event space $W$ to the number of topics $K$, and to compare these dimension-reduced representations. This distinguishes LDA-Link from other methods that leverage only the common events that appear in both views, which causes them to fail when a sequence pair displays sparse common occurrence.

The algorithm works in three separate phases. First in the ``Learning'' phase, topics are estimated from the views in the two domains. In the second phase, the topic estimates are used to find the topic proportions $\theta$'s for each view that maximizes the posterior distribution given the estimated topics. This phase is the ``Dimensionality Reduction'' phase that effectively reduces the dimension of each view from $L-1$ to $K-1$. A score is then computed for every pair of views $(X_i, Y_j)$ as the Jensen-Shannon distance between their topic proportions. In the last ``Rank-$k$ Linkage'' phase, up to $k$ candidate $Y$ views are declared as potential matches for each $X$ view based on these scores.

The rest of this section discusses each phase in detail. Appendices~\ref{sec:topic_conv} through~\ref{sec:proofs} will explore the guarantees of this algorithm in theory.

\paragraph{Learning Phase: Topic Estimation}

In the Split-Document model, co-referent views share the same topic proportion, and these views are essentially a bipartition of a document generated through the normal, un-split LDA process described in Figure~\ref{fig:lda}. If the true co-reference linkage structure is known, estimating the topics for Split-Document model would amount to finding the MAP LDA topics where all co-referent views are combined into one document. Yet, since the co-reference structure is the unknown that we aim to find, it is difficult to directly compute the MAP estimates of the generative model in Figure~\ref{fig:split_doc}.

A workaround is to consider a slightly different surrogate model in which each view is considered as a separate document that has a topic proportion of its own right, and instead learn the topics optimal to this model as an approximation. This surrogate model is called the \emph{Independent-View} model shown in Figure~\ref{fig:indep_view}. Intuitively, in large-document limits where the number of words reaches infinity, learning the Independent-View model is equivalent to learning LDA with duplicates of each document. (We study the effectiveness of this surrogate learning in Appendix~\ref{sec:topic_conv})

\begin{figure}[h!]
\centering
\includegraphics[width=0.7\textwidth]{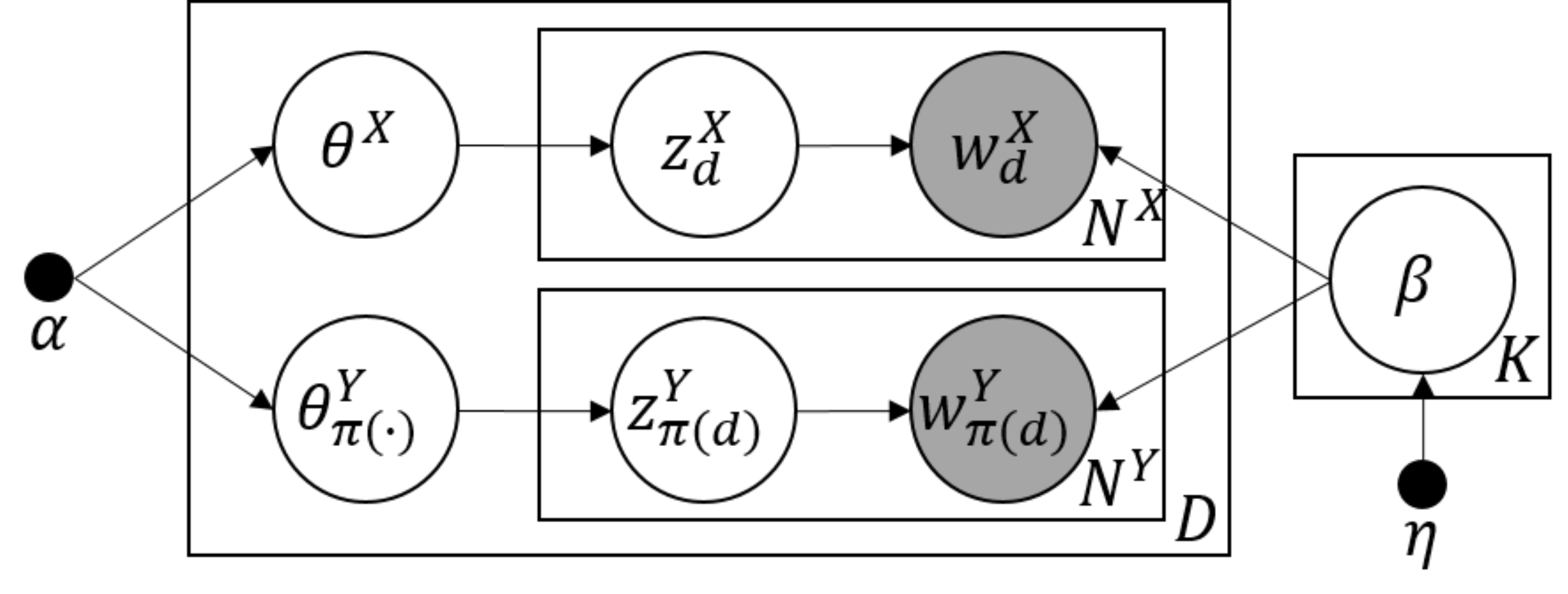}
\caption{Independent-View Model (Surrogate for Learning Topics in Split-Document Model)}
\label{fig:indep_view}
\end{figure}

We will call the ELBO of the Independent-View model $\ELBO'$, which is given by
\begin{equation}
	\ELBO' = \sum_d l'_d = \sum_d \left( l\left(X_d, \phi^X_{d}, \gamma^X_{d}, \lambda\right) + l\left(Y_{\pi(d)}, \phi^Y_{\pi(d)}, \gamma^Y_{\pi(d)}, \lambda\right) \right)
\label{eq:ELBO'}
\end{equation}
where $l$ is given as
\begin{equation}
	l(R,\phi,\gamma,\lambda) = \sum_w R_{w} \sum_k \phi_{w,k} \left(\Expof_{\gamma}[\log\theta_{k}] + \Expof_{\lambda}[\log\beta_{k,w}] - \log\phi_{w,k}\right) + \frac{1}{2D}f(\lambda),
\end{equation}
and $f(\lambda) = \sum_k -\log\left(\sum_w \lambda_{k,w} + \sum_w(\eta-\lambda_{k,w})\Expof_{\lambda}[\log\beta_{k,w}] + \log\Gamma(\lambda_{k,w})\right)$. Note its difference from the ELBO of the Split-Document model, which is
\[
	\ELBO = \sum_d l_d = \sum_d 2l\left(X_d + Y_{\pi(d)}, \phi_d, \gamma_d, \lambda\right).
\] 

Algorithm~\ref{alg:LDA} summarizes the Learning phase.

\begin{algorithm}[t]
	\caption{Learning Phase: Topic Estimation}
	\label{alg:LDA}
	\begin{algorithmic}
	\Require $P^X_{d,w} = \frac{1}{N}X_{d,w}, P^Y_{d,w} = \frac{1}{N}Y_{d,w}$\\
	Define $\rho_t \triangleq (\rho_0 + t)^{-K}$. Initialize $\lambda \in \mathbb{R}$
		\For{$P \in \cup_{d\in[D]}\{P^X_d, P^Y_d\}$}
			\State Initialize $\gamma_{d,k} = 1$
			\Repeat
				\State $\phi_{w,k} = \frac{\exp\{\Expof_q[\log \theta_{k}] + \Expof_q[\log \beta_{k,w}]\}}{\sum_k\exp\{\Expof_q[\log \theta_{k}] + \Expof_q[\log \beta_{k,w}]	\}}$
				\State $\gamma_{k} = \alpha + N \sum_w \phi^X_{w,k} P_{w}$
			\Until{$\gamma_d$ has converged}
			\State Compute $\tilde{\lambda}_{k,w} = \eta + DNP_{d,w}\phi_{w,k}$
			\State $\lambda = (1 - \rho_t) \lambda + \rho_t \tilde{\lambda}$
		\EndFor\\
		\Return $\lambda$
	\end{algorithmic}
\end{algorithm}

\paragraph{Dimension Reudction and Rank-$k$ Linkage Phases}

With the topics obtained in the learning phase, the topic proportions are learned for each view using the conventional coordinate descent method, mapping each view to a vector on the $K-1$-dimensional latent semantic simplex space. Once these topic proportions are learned, a dissimilarity score based on Jensen-Shannon distance is computed for every pair of views $(X_i, Y_j)$ as
\begin{equation}
	score(X_i, Y_j) = JS(\theta^X_i, \theta^Y_j) = KL\left(\theta^X_i\left\|\frac{1}{2}\left(\theta^X_i + \theta^Y_j\right)\right.\right) + KL\left(\theta^Y_j\left\|\frac{1}{2}\left(\theta^X_i + \theta^Y_j\right)\right.\right).
\end{equation}

Algorithm~\ref{alg:score} summarizes the Dimension Reduction phase.

\begin{algorithm}[h!]
	\caption{Dimensino Reduction Phase}
	\label{alg:score}
	\begin{algorithmic}
	\Require $P^X_{d,w} = \frac{1}{N}X_{d,w}, P^Y_{d,w} = \frac{1}{N}Y_{d,w}$ for $d\in[U],w\in[W]$\\
	Initialize $\gamma^X_d = \vec{\mathbf{}{1}}$, $\gamma^Y_d = \vec{\mathbf{1}}$
		\For{$\DOM \in \{X,Y\}$}
			\Repeat
				\State $\phi^\DOM_{w,k} = \frac{\exp\{\Expof_q[\log \theta^\DOM_{k}] + \Expof_q[\log \beta_{k,w}]\}}{\sum_k\exp\{\Expof_q[\log \theta^\DOM_{k}] + \Expof_q[\log \beta_{k,w}]	\}}$
				\State $\gamma^\DOM_{k} = \alpha + N \sum_w \phi^\DOM_{w,k} P_{w}$
			\Until{$\gamma^\DOM_d$ has converged}
		\EndFor \\

		\Return $\gamma^X$ and $\gamma^Y$
	\end{algorithmic}
\end{algorithm}

The final Rank-$k$ Linkage phase selects for each $X$ view, $k$ candidate views of the smallest dissimilarity score, as shown in Algorithm~\ref{alg:ranking}.

\begin{algorithm}[h!]
	\caption{Rank-$k$ Linkage Phase}
	\label{alg:ranking}
	\begin{algorithmic}
	\Require $S = [0,\infty]^{U\times U}$, where $S_{i,j} = score(X_i, Y_j)$
		\For{$i \in [U]$}
			\State $S \in [0,\infty]^U$, where $S_j = score(X_i, Y_j)$
			\State $\pi^*(i) = \{j_{1,...,k} | j_{1,...,U} \mbox{ is a permutation of } [U] \mbox{ such that } S_{j_a} < S_{j_b} \mbox{ if } a<b\}$
		\EndFor\\
		\Return $\pi^*$
	\end{algorithmic}
\end{algorithm}

\section{CASE STUDY: BREAKING ANONYMITY IN LOCATION-BASED SOCIAL MEDIA}
\label{sec:eval}

We now apply LDA-Link to location-based social media profile linkage, where profiles of the same individual in different social media are matched based on their online activities. As explained in the introduction, location-based data sets present critical challenges of sparse common occurrence. The ability to reconcile identities and bridge data across online social platforms of different thematic nature has significant commercial as well as privacy implications \cite{riederer2016linking}. First the data sets and the baseline algorithms are explained, and the performance of LDA-Link is evaluated.

\subsection{DATA SETS}

A total of two pairs of datasets were used, both of which contain only the spatio-temporal information of public activities of profiles in two different social media collected during a common time span. Both of these datasets were studied and explained previously in \cite{riederer2016linking}.

\begin{itemize}

	\item \textbf{Foursquare-Twitter (FT): }
This dataset contains checkins on Foursquare and posts on Twitter, both of which are  geo-temporally tagged. Each account activity is therefore a (user id, time, GPS coordinate) tuple. By selecting only the users who has records in both social media accounts, a total of 862 users, 13,177 Foursquare checkins, and 174,618 tweets were obtained. The imbalance of activities in the two social media is a factor of challenge.

While Foursquare checkins are typically associated with a user exposing their current activities, tweets are associated with more general behaviors. In this dataset, only 260 pairs of checkins (less than 0.3\%) had exactly matching GPS coordinates, and none of them were made within 10 seconds of each other, suggesting that it is highly unlikely that there is a pair of account activities forwarded by software across both services \cite{riederer2016linking}.

	\item \textbf{Instagram-Twitter (IG): }
The second dataset is also a collection of (user id, time, GPS coordinate) from public posts on the photo-sharing site Instagram, and the microblogging service Twitter. This data set was obtained by linking Instagram and Twitter accounts that were associated with the same URL in their user profiles, and downloading the spatio-temporal tags of the tweets made by these Twitter accounts \cite{riederer2016linking}. The collection includes 1717 unique users, 337,934 Instagram posts, and 447,336 Tweets.

\end{itemize}

Both pairs of data sets contain the timestamped visits of each user, and while users are communicating an action or a message to the general public, the events(posts) in each data set are collected within different thematic contexts. Split-Document model dictates that each recorded visit occurs with a certain purpose, e.g., shopping, sports, travel, hobbies etc.. LDA-Link attempts to discover such motifs (or ``topics'') and associate with every view a proportional mixture of these topics to represent its characteristic feature.

\paragraph{Modulating the Sparsity of Common Events with Spatiotemporal Granularity}
The full GPS coordinates and timestamps of the posts in each of these data sets never coincide precisely, which raises the problem of determining the meaning of ``common occurrence.'' Our solution is to bin time and locations based on spatiotemporal proximity, and declare events belonging to the same bin to have occurred in common. The size of the bin can be controlled to represent different levels of measurement precision, and changing the event space granularity in this manner modulates the discreteness and size of the event space. This allows the study of the robustness of the algorithm under different degrees of sparsity available to the linking agent. In our experiment we bin event locations by truncating the coordinate values after a certain number of digits below decimal and bin event timestamps into a fixed number of bins. We call each of these numbers ``spatial'' and ``temporal'' granularity.

\subsection{PRIOR ALGORITHMS}

Here we summarize three baseline algorithms for identity reconciliation that were inspired by state-of-the-art algorithms in the social computing literature.

\paragraph{Sparsity-Based Scoring: The ``Netflix Attack'' (NFLX)}

Based on the algorithm used to de-anonymize the Netflix prize dataset in \cite{narayanan2008robust}, \cite{riederer2016linking} describes a variation for cross-domain reconciliation, where a score between views $X_i$ and $Y_j$ is defined as
\[
	S(X_i, Y_j) = \sum_{(l,t)\in X_i \cap Y_j} w_l f_l(X_i, Y_j),
\]
where
\[
	w_l = \frac{1}{\ln\left(\sum_{j}Y_j(l) \right)} \mbox{, and } f_l(X_i, Y_j) = \exp\left(\frac{X_i(l)}{n_0}\right) + \exp\left(-\frac{1}{X_i(l)}\sum_{t:(l,t)\in X_i}\min_{t':(l,t)\in Y_j} \frac{|t-t'|}{\tau_0}\right),
\]
and $n_0, \tau_0$ are model parameters.

This algorithm relies on the exact timestamps. The algorithm matches an $X$ view with the $Y$ view with the smallest score, and leaves it unmatched if the best candidate and the second best differ in scores by no more than a $\epsilon$ standard deviations. In resemblance to \cite{narayanan2008robust} this score favors locations that are rarely visited, frequent visits to the same location, and visits that occur close in time, thus exploiting sparsity. \cite{riederer2016linking}

\paragraph{Density-Based Scoring: JS-Distance Matching (JS-Dist)}

In \cite{unnikrishnan2015asymptotically}, authors proved the asymptotic optimality of the JS-Distance between relative frequencies (empirical distributions) of two observation sequences as a measure of disparity.
\[
	S(X_i, Y_j) = JS(X_i, Y_j) = KL\left(X_i \left\| \frac{1}{2}\left(X_i + Y_j\right)\right.\right) + KL\left(Y_j \left\| \frac{1}{2}\left(X_i + Y_j\right)\right.\right)
\]

Their algorithm estimates the true matching as the matching that mininimizes the sum of the JS measures. It relies on the density of the data based on the asymptotic convergence of empirical distributions implied by Sanov's Theorem.

\paragraph{Leveraging Both Sparsity and Density: Poisson Process (POIS)}

\cite{riederer2016linking} suggests a simple generative model for mobility records in which the number of visits to each location within a certain time period follows a Poisson distribution whose rate parameters are specific to the location and period of the visit. Based on this model, the following similarity score between two views can be defined for an MAP estimate of the identity linkage structure:
\[
	S(X_i, Y_j) = \sum_{l,t} \ln\phi_{l,t}(X_i(l,t), Y_j(l,t)),
\]
where $l$ and $t$ are location and time indices and
\[
	\phi_{l,t}(x,y) = \frac{e^{-\lambda p_1 p_2} (1-p_1)^y (1-p_2)^x}{(\lambda(1-p_1)(1-p_2))^{\min(x,y)}}\Expof\left[\frac{(X+\max(x,y))!}{(X+|x-y|)!}\right].
\]

The identity mapping is the mapping that maximizes the expected sum of scores.

\subsection{PERFORMANCE ANALYSIS}

We now present the empirical performances of LDA-Link. In light of the ``Rank-$k$'' matching we described in Section~\ref{sec:prob_form}, we measure our performance in terms of the Rank-$k$ recall, which is the proportion of views in the source data set whose co-referent views in the source data set are fully contained in the set of $k$ best candidates, not allowing ties. We draw our attention to LDA-Link's relative performance as compared to the baseline algorithms, and evaluate its robustness against sparse common occurrence by (1) modulating the granularity of the event space, and (2) testing on a sample of sequences with sparse common events.

\paragraph{LDA-Link and Domain-Specific Alternatives}

Figure~\ref{fig:recall} plots the best Rank-$k$ recalls of each algorithm for different values of $k$. LDA-Link outperforms the domain-specific reconciliation algorithms as the size of the candidate set $k$ is increased. Although NFLX and POIS perform better for small values of $k$, LDA-Link's recall increases more rapidly, exceeding POIS and NFLX respectively at $k=10$ (1.16\% of the total number of views) and $k=19$ (2.20\%) for FSQ-TWT, and at $k=18$  (1.05\%) and $k=23$ (1.34\%) for IG-TWT. The early plateu for POIS occurs due to the lack of rules for evaluating the similarity of a pair of views when none of their events belong to the same location or time bins. The plateau is reached more slowly at a higher recall for NFLX because NFLX depends on precise time-difference instead of binning by time and is thus slightly less vulnerable to time granuarity. LDA-Link, on the other hand, computes the similarity of a pair of views on a dimension-reduced space of topic proportions, which removes the dependence on the granularity of the event space. When $k$ reaches up to 10\% of the total number of views, LDA-Link outperforms POIS and NFLX by over 50\% and 20\% on FSQ-TWT. 

\newpage

\begin{figure}[t]
\centering
\makebox[\linewidth]{
\begin{tabular}{c}
\includegraphics[width=\textwidth]{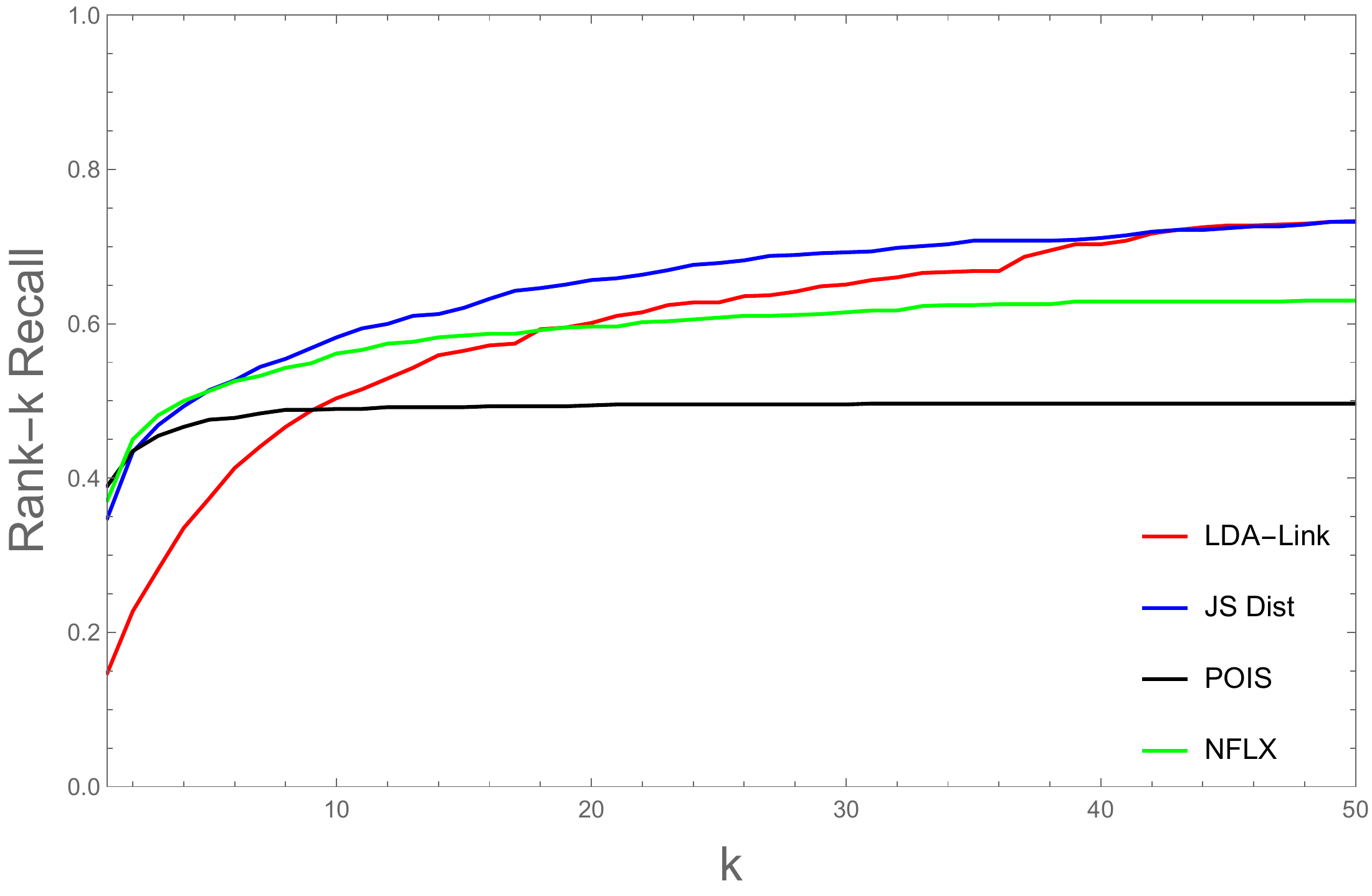} \\
(a) Recall on FSQ-TWT \\
\includegraphics[width=\textwidth]{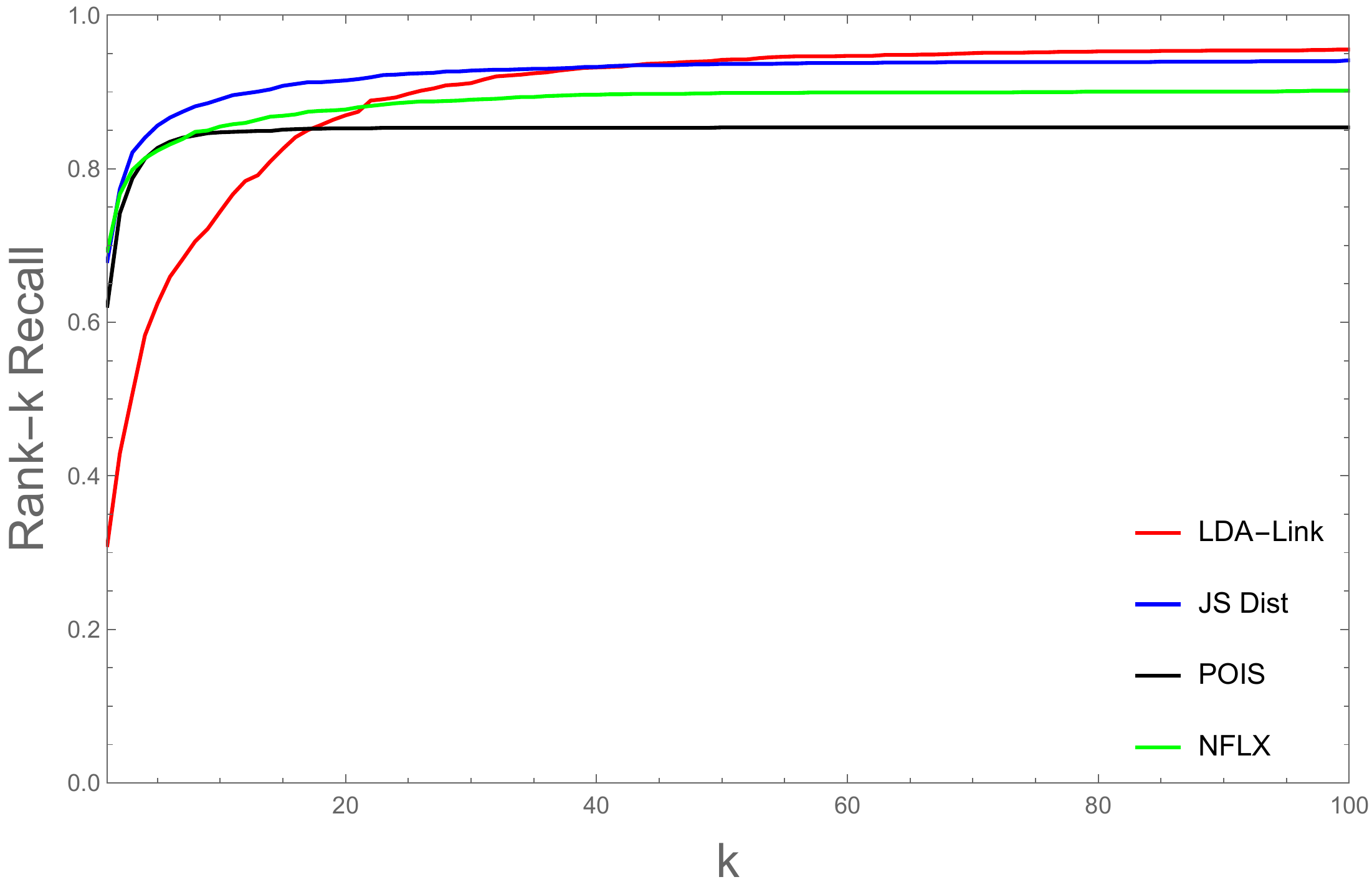} \\
(b) Recall on IG-TWT \\
\end{tabular}
}
\caption{Best Rank-$k$ Recall Plots on the Two Datasets}
\label{fig:recall}
\end{figure}

\newpage

\begin{landscape}

\begin{figure}[t]
\centering
\makebox[\linewidth]{
\begin{tabular}{cc}
\includegraphics[width=0.8\textwidth]{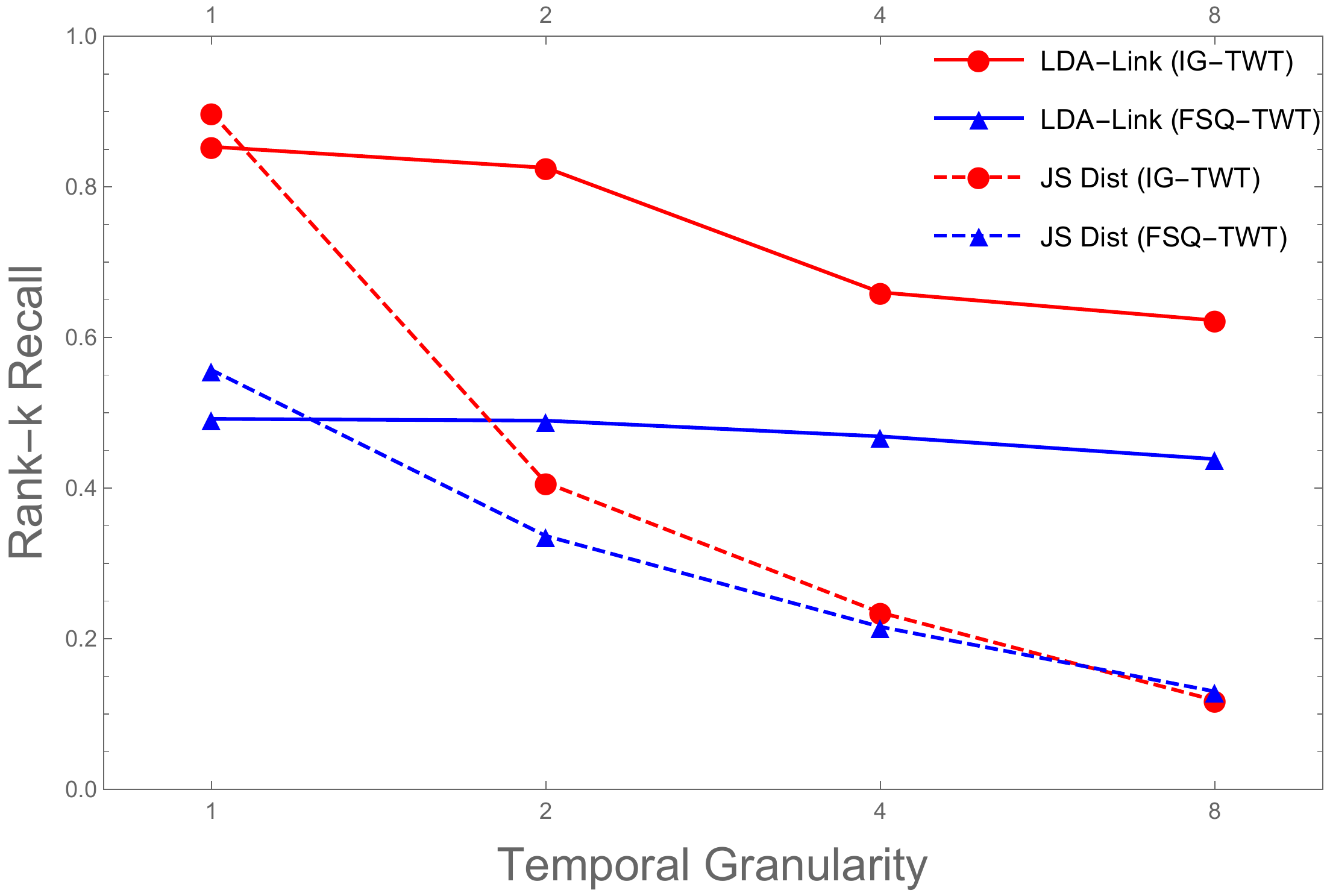} &
\includegraphics[width=0.8\textwidth]{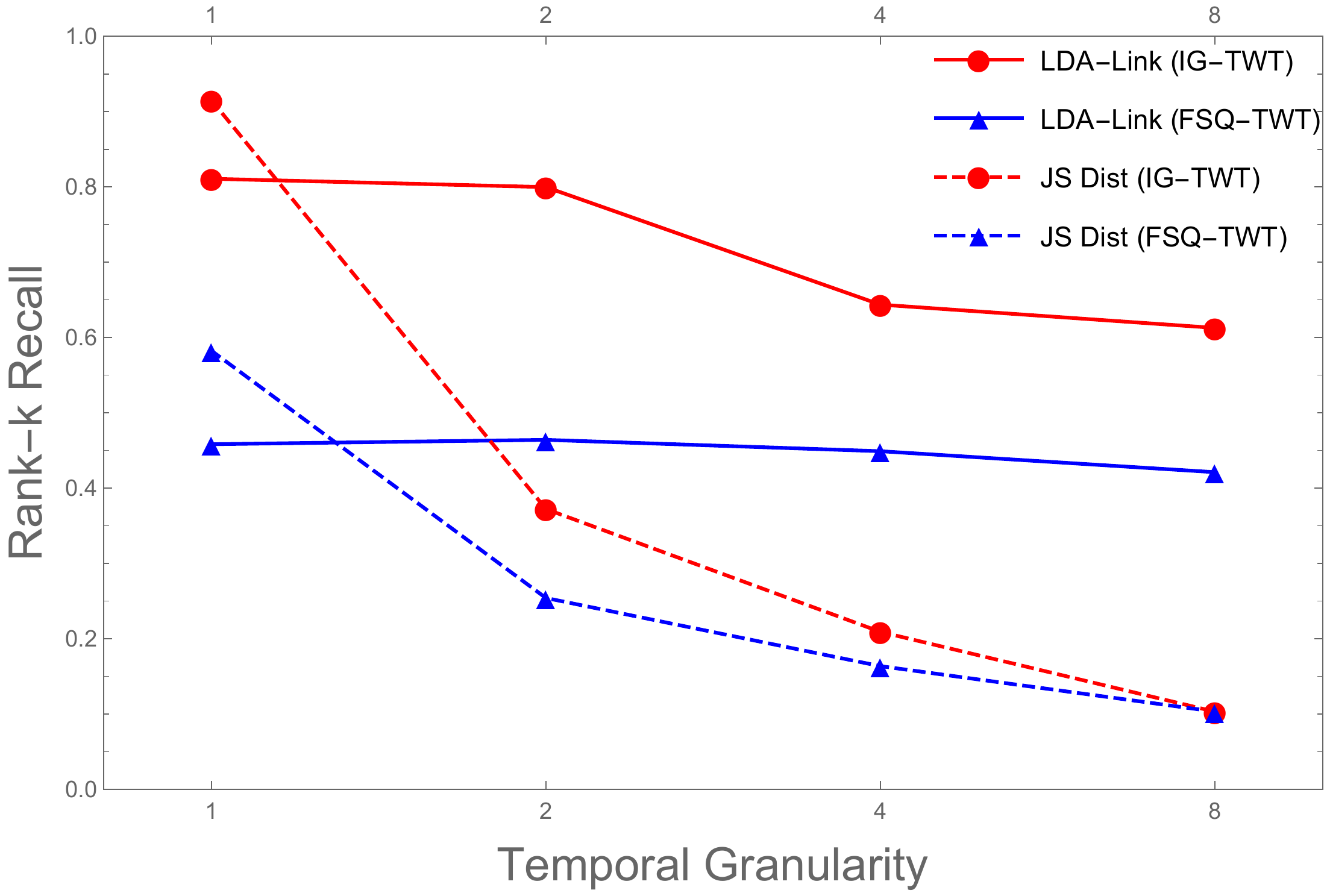} \\
(a) Spatial Granularity = 0&
(b) Spatial Granularity = 1\\
\includegraphics[width=0.8\textwidth]{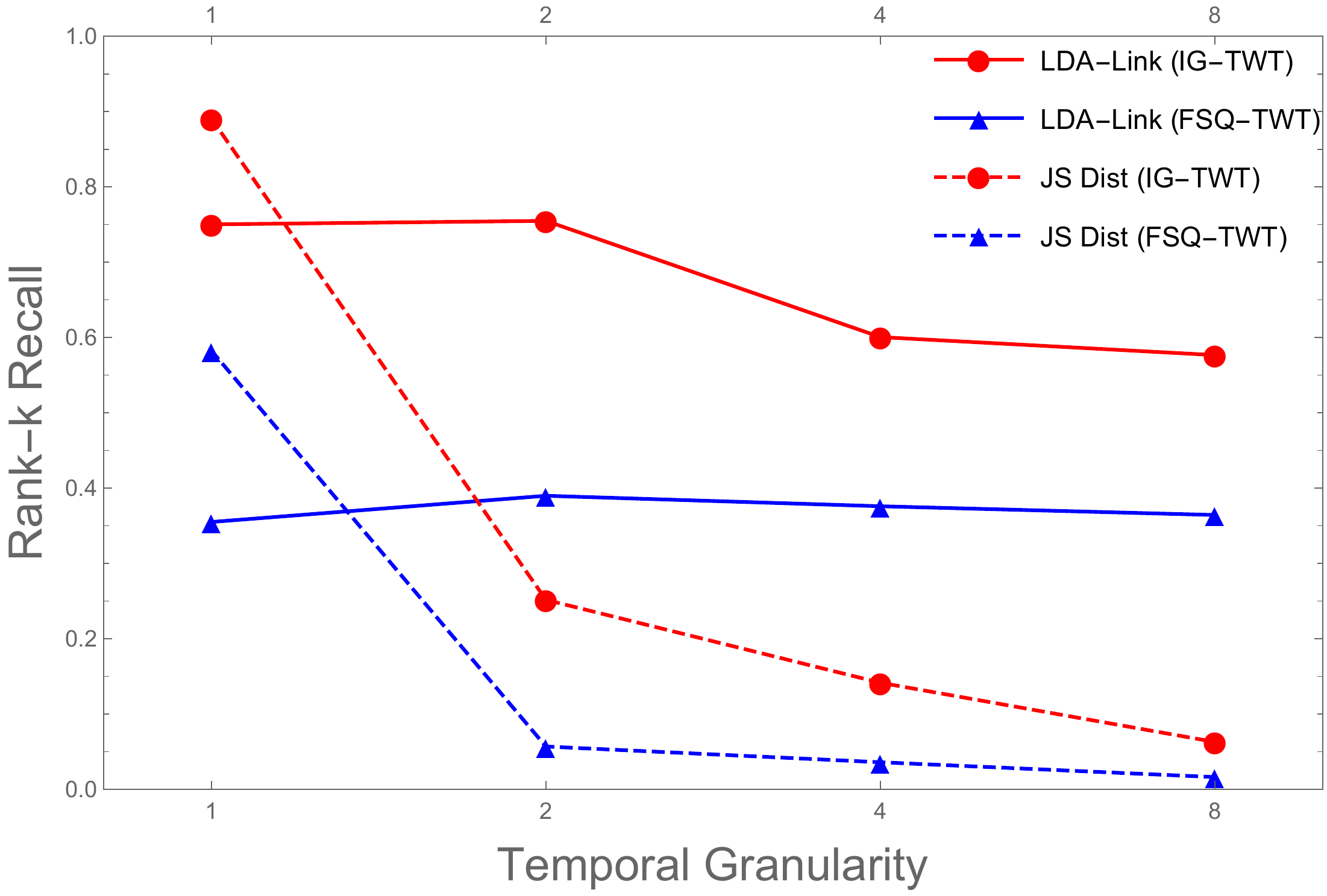} &
\includegraphics[width=0.8\textwidth]{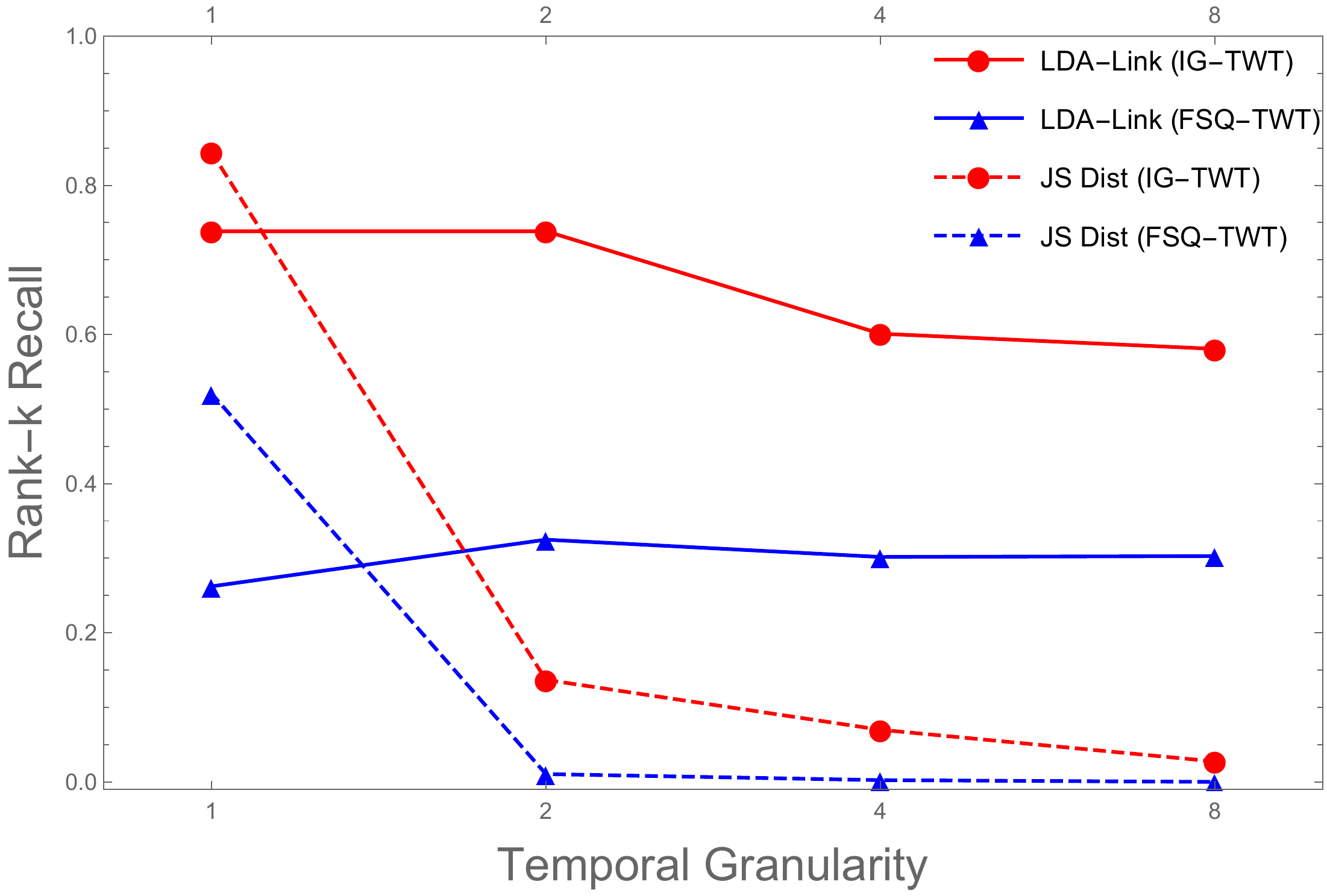} \\
(c) Spatial Granularity = 2&
(d) Spatial Granularity = 3\\
\end{tabular}
}
\caption{Rank-$k$ Recall of LDA-Link and JS-Dist for Different Spatiotemporal Granularities}
\label{fig:gran}
\end{figure}

\end{landscape}

\newpage

\begin{figure}[t]
\centering
\makebox[\textwidth]{
\begin{tabular}{c}
\includegraphics[width=\textwidth]{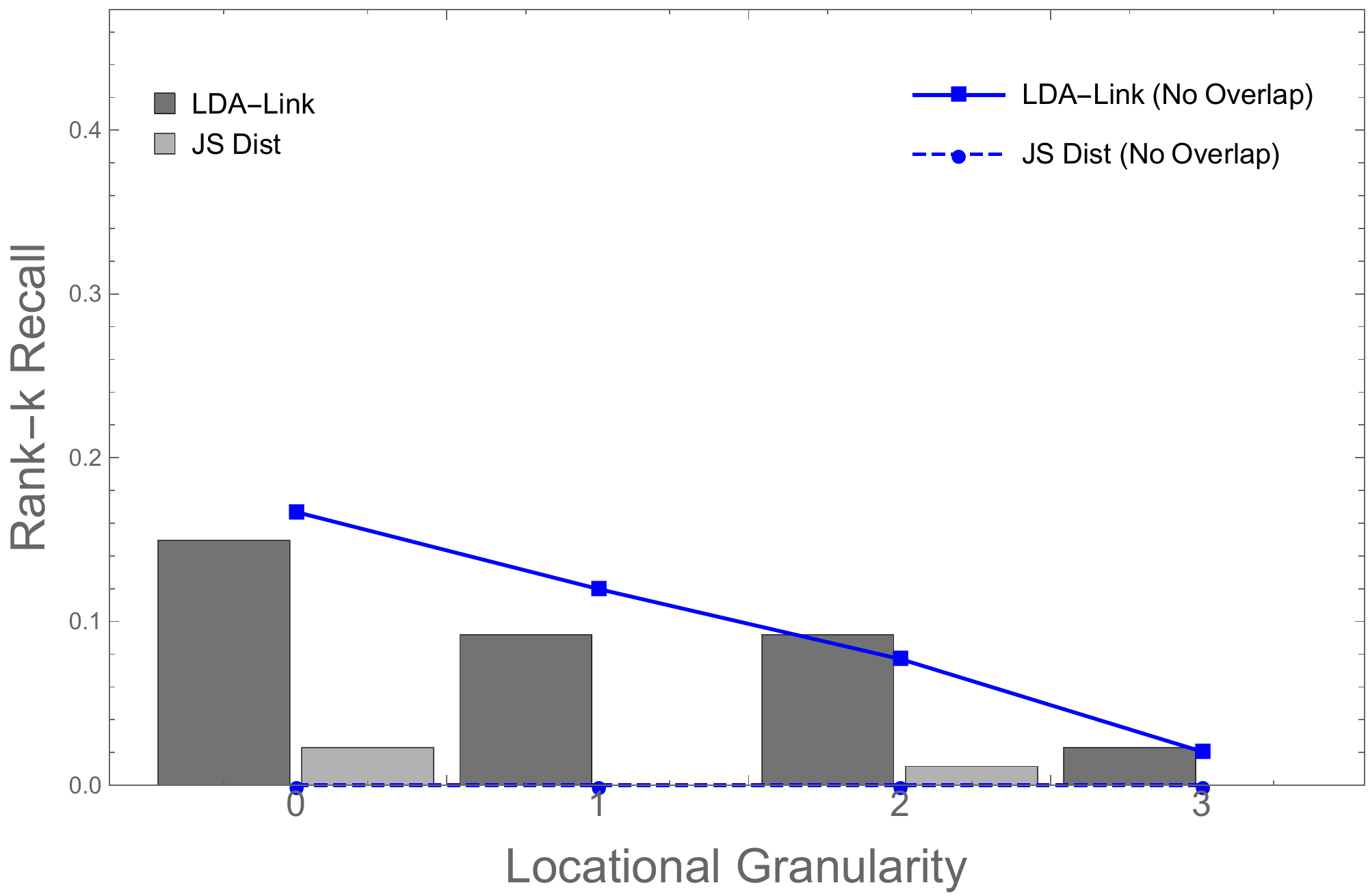} \\
(a) FSQ-TWT ($k$=10)\\
\includegraphics[width=\textwidth]{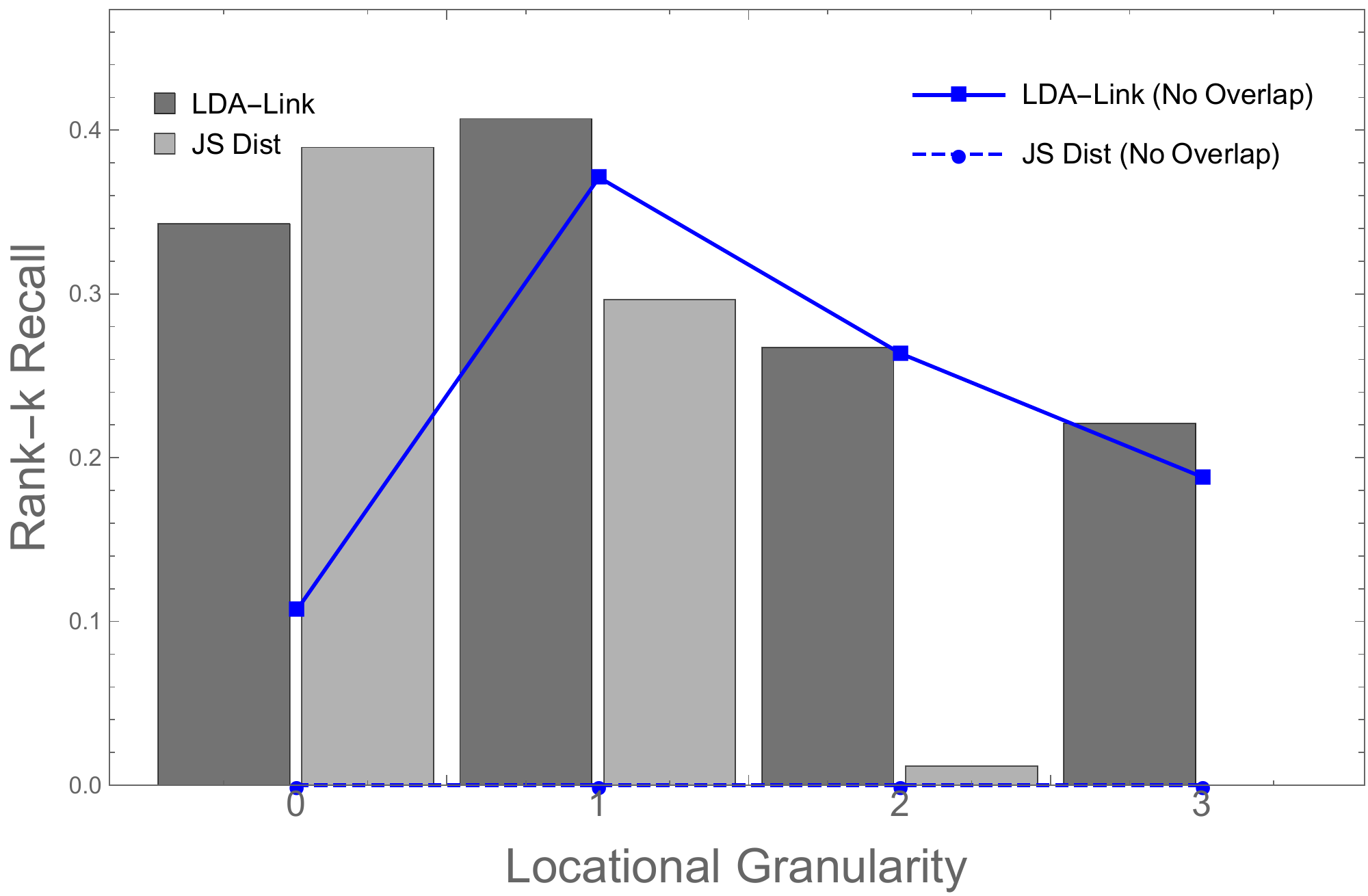} \\
(b) IG-TWT ($k$=20)
\end{tabular}
}
\caption{Rank-$k$ Linkage Recall of Sequence Pairs with Top 10\% L1 Distance}
\label{fig:overlap}
\end{figure}

\newpage

\paragraph{Robustness against the Event Space Granularity}

In Figure~\ref{fig:gran}, we tested LDA-Link against different levels of spatiotemporal granularity of the event space. The number of learned topics were fixed to $K=500$ and $K=600$ for FSQ-TWT and IG-TWT respectively. The plot displays the Rank-$k$ matching of LDA-Link and JS-Distance matching (dashed lines) on the two data sets, where $k$ was set to 10 and 20 for FSQ-TWT and IG-TWT respectively for a consistent comparison. Although the rank-$k$ recall for JS-Dist is greater than LDA-Link when temporal granularity is small, the capability of JS-Dist is rapidly compromised for higher spatiotemporal granularity, making a much steeper drop to zero for higher granularities. Meanwhile, LDA-Link maintains a more-or-less stable performance, which proves its robustness against the sparsity of the event space.

\paragraph{Linking Views with Sparse or No Common Events}

Lastly, we assess LDA-Link's robustness to the second type of data sparsity, which is the actual degree of event overlap for a co-referent pair of sequences. In Figure~\ref{fig:overlap}, we took 10\% of the population whose views have the highest L1 distance, and plotted the best Rank-$k$ recall of LDA-Link and JS-Dist on this sample for different spatial granularities (bar graph). The plot also displays the best Rank-$k$ recall of the two algorithms on the sample of users whose profiles have no common posts at all (No Overlap). Although the overall best performance of JS-Dist is greater than LDA-Link (Figures~\ref{fig:recall},~\ref{fig:gran}), its performance is far eclipsed by LDA-Link on the spase sample and the difference is even more striking for greater input granularity. Most critically, LDA-Link is able to achieve up to 37\% Rank-$k$ recall on IG-TWT and 17\% recall on FSQ-TWT for users with no common posts at all, while JS-Dist fails to reconcile any. Again, this is the effect of LDA-Link's dimension reduction and semantic comparison.

\section{CONCLUSION}
\label{sec:concl}

We defined the problem of sequence linkage, a newly studied problem of identity uncertainty. As a solution to sparsity-robust sequence linkage, we described \emph{Split-Document} and \emph{LDA-Link}. Split-Document is a mixed-membership model for the generation of event sequences across data sets of different domains which uses the concept of motifs that account for the generation of individual events and their collective patterns. Based on this model, LDA-Link can infer the correct identity linkage structure across data sets through a semantic comparison of each sequence pair. By conducting an empirical validation in linking profiles across different location-based social media, we tested LDA-Link's robustness against factors of common event sparsity by modulating the granuarlity of the event space and testing against a selective sample of co-referent views with rare common occurrence. We proved that LDA-Link is able to significantly outperform the current state-of-the-art solution to sequence linkage when linking social media profiles that have no commonly occurring posts at all.

Split-Document can be extended to accommodate more than two data sets, each potentially having different views with duplicate identities. Extra layers of stochasticity can also be embedded into the original Split-Document model to construct more complex models. For example, one can inject an ``observation'' layer into the original model to take into consideration different rules of observation emission, which may include the probability of observation or different distortion processes (e.g. ``hit-miss'' distortion). Continuous or non-categorical variants for Gaussian or Poisson events is also a potential future direction of study. The incorporation of a Poisson model should especially be suitable for discretizing continuous time events. Another area of development is the incorporation of Bayesian non-parametric clustering models such as Dirichlet Process and Chinese Restaurant Processes as a ``clustering'' layer to model multiple duplicate identities of different views. 

\section*{Acknowledgement}

The author gratefully acknowledges Professor Augustin Chaintreau and Professor David Blei for their valuable comments and feedback.

\medskip

\bibliographystyle{unsrt}
\bibliography{ms}

\newpage

\appendix

\label{sec:analy}

\begin{center}
\textbf{\Large APPENDIX: CONVERGENCE STUDY \\}
\end{center}

In the appendix, we investigate the theoretical effectiveness of each step of the LDA-Link algorithm. Appendix~\ref{sec:topic_conv} studies the effectiveness of the topic learning phase. Appendix~\ref{sec:dim_red} investigates the proximity of the co-referent views in the semantic simplex space. Appendix~\ref{sec:k_rank_linkage} studies the optimality of the $k$-rank linkage algorithm. Appendix~\ref{sec:proofs} provides the proofs to the propositions and theorems in the appendix.

\section{LEARNING PHASE: \newline Learning Topics With and Without Omniscient Knowledge}
\label{sec:topic_conv}

The learning step in LDA-Link is equivalent to performing online variational for LDA with each view in separate domains as a single input document. In this section we lay out the steps required for proving the high-probability asymptotic proximity of the topics learned by LDA-Link to the topics learned through online LDA when every co-referent pair of views is reconciled and combined into a single document.

\subsection{SKL Divergence between the Learned Topics}

The topic-learning step is a stochastic variational inference step that optimizes the per-document ELBO $\ELBO'$ given in Equation~\ref{eq:ELBO'}. We start with the simple and slightly less realistic case that $X_d$ and $Y_{\pi(d)}$ are fetched together at each $\lambda$ iteration. $\lambda$ moves in the direction of the natural gradient of $l'_i$, which is given by
\begin{multline}
	\hat{\nabla}_\lambda l'_d =  \hat{\nabla}_\lambda\left(\sum_w X_{d,w} \sum_k \phi^X_{d,w,k}(\lambda) \Expof_\lambda[\log\beta_{k,w}]\right) \\
	+\hat{\nabla}_\lambda\left(\sum_w Y_{\pi(d),w} \sum_k \phi^Y_{\pi(d),w,k}(\lambda) \Expof_\lambda[\log\beta_{k,w}]\right) + \frac{1}{D}f(\lambda), 
\label{eq:gradient}
\end{multline}
where $\phi^X_d(\lambda)$ and $\phi^Y_{\pi(d)}(\lambda)$ are the local optimum of the variational parameters $\phi^X_d$ and $\phi^Y_{\pi(d)}$. See Appendix~\ref{sec:proof_eq_gradient} for its full derivation. LDA-Link's topic-learning algorithm applies the update formula 
\begin{equation}
	\lambda'^{(t+1)} = \lambda'^{(t)} + \rho_t \hat{\nabla}_\lambda l'_d(\lambda'^{(t)})
	\label{eq:w'}
\end{equation} 
to obtain the topic estimate $\lambda'$.

The gradient of the per-document ELBO of the Split-Document model is given by \cite{hoffman2010online}
\[
	\hat{\nabla}_\lambda l_d = \hat{\nabla}_\lambda \left(\sum_w (X_{d,w} + Y_{\pi(d),w}) \sum_k \phi_{d,w,k}(\lambda)\Expof_\lambda[\log\beta_{k,w}]\right) + \frac{1}{D}f(\lambda).
\]
The optimal update formula for the Split-Document model would thus be 
\begin{equation}
	\lambda^{(t+1)} = \lambda^{(t)} + \rho_t \hat{\nabla}_\lambda l_d(\lambda^{(t)}).
	\label{eq:w}
\end{equation}

We state the following claim that, with high probability, the SKL distance between the convergent values of the topics learned through LDA-Link (using Equation~\ref{eq:w'}) and the topics learned through the optimal update formula for the Split-Document model (using Equation~\ref{eq:w}) is bounded with high probability. 

\begin{claim}
\[
	P\left(\lim_{t\rightarrow\infty} D_{KL} (\lambda^{(t)}, \lambda'^{(t)}) < \Delta_n \right) = 1 - \epsilon_n,
\]
where $\Delta_n$ depends on the number $n$ of records in the view and $\lim_{n\rightarrow \infty} \epsilon_n=0$
\label{clm:conv}
\end{claim}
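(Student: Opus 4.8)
The plan is to reduce the claim to a high-probability bound on the per-step difference between the two natural gradients $\hat\nabla_\lambda l_d$ and $\hat\nabla_\lambda l'_d$, and then to propagate that bound to the fixed points of the two stochastic ascent recursions in Equations~\ref{eq:w} and~\ref{eq:w'}. Both recursions share the same global structure and the same $\frac{1}{D}f(\lambda)$ term; the sole discrepancy between the gradients is the local variational optimum that enters them—a single $\phi_{d,w,k}$ fit to the combined counts $X_{d,w}+Y_{\pi(d),w}$ in the Split-Document gradient, versus the two separate optima $\phi^X_{d,w,k}$ and $\phi^Y_{\pi(d),w,k}$ fit to $X_{d,w}$ and $Y_{\pi(d),w}$ alone in the LDA-Link gradient. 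Consequently the entire argument hinges on showing that these local optima coincide up to a vanishing error as the view length $n$ grows.

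First I would exploit the generative assumption of Split-Document: co-referent views $X_d$ and $Y_{\pi(d)}$ are drawn i.i.d. from the same per-entity mixture $\beta^\top\theta_d$. By a multinomial concentration bound—equivalently, Sanov's theorem applied to the empirical measures—the normalized frequency vectors $P^X_d$, $P^Y_{\pi(d)}$, and the combined $P_d$ all lie within $\ell_1$-distance $O(n^{-1/2})$ of the common mean $\beta^\top\theta_d$ with probability at least $1-\epsilon_n$, where $\epsilon_n\to 0$; this is precisely the source of both $\Delta_n$ and $\epsilon_n$ in the statement. I would then show that the coordinate-ascent fixed point $(\phi,\gamma)$ is Lipschitz in its input frequency vector $P$ away from the simplex boundary, since the inner loop of Algorithm~\ref{alg:LDA} is a smooth fixed-point map in $\phi_{w,k}\propto\exp\{\Expof_q[\log\theta_k]+\Expof_q[\log\beta_{k,w}]\}$ and $\gamma_k=\alpha+N\sum_w\phi_{w,k}P_w$, with Jacobian in $P$ bounded for fixed $\lambda$. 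Composing with the concentration bound yields $\|\phi_d-\phi^X_d\|$ and $\|\phi_d-\phi^Y_{\pi(d)}\|=O(n^{-1/2})$ on the high-probability event.

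Next I would translate the $\phi$-gaps into a gap between the target sufficient statistics. Using the explicit form $\tilde\lambda_{k,w}=\eta+DNP_{d,w}\phi_{w,k}$ from Algorithm~\ref{alg:LDA}, the two statistics differ only by $D\left[(X_{d,w}+Y_{\pi(d),w})\phi_{d,w,k}-X_{d,w}\phi^X_{d,w,k}-Y_{\pi(d),w}\phi^Y_{\pi(d),w,k}\right]$; since the raw counts are $O(n)$ while the $\phi$-gaps are $O(n^{-1/2})$, this difference is $O(n^{-1/2})$ relative to the $O(n)$ magnitude of the statistics themselves, so the two recursions chase targets whose normalized topic estimates differ by a vanishing perturbation. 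Coupling the two recursions by feeding them the same sequence of selected entities and the same schedule $\rho_t=(\rho_0+t)^{-K}$, the cited almost-sure SGD convergence sends each to a stationary point of its ELBO; characterizing the limits $\lambda^*,\lambda'^*$ by their stationarity equations and invoking a perturbation/implicit-function argument then gives the desired symmetrized-KL bound $D_{KL}(\lambda^*,\lambda'^*)\le\Delta_n$ after a local quadratic expansion of the Dirichlet SKL about the shared mean.

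The hard part will be this last propagation step. The convergence result I rely on is qualitative (a.s. convergence to a local optimum), not quantitative, so turning closeness of the two gradient fields into closeness of their limit points requires establishing that the variational objective is locally well-conditioned—its Hessian bounded away from singular near the optimum, uniformly in $n$—which is delicate because the objective is nonconvex and, after eliminating $\phi$, $\lambda$ enters the stationarity condition only implicitly. A secondary subtlety is the bookkeeping of the $O(n)$ scaling of the sufficient statistics: one must argue in the normalized topic coordinates so that the $O(n^{-1/2})$ relative perturbation genuinely yields a \emph{vanishing} SKL rather than a constant one. Verifying this local regularity, or alternatively constructing a contraction coupling that controls the accumulated divergence under the learning-rate schedule $\rho_t$, is the principal technical obstacle.
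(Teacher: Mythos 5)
First, a point of calibration: the paper does not actually prove Claim~\ref{clm:conv} either. What it offers is explicitly labeled ``A Potential Validation Approach'' --- a three-step outline --- so the comparison here is between two proof strategies, not between your sketch and a finished argument. Your first two steps coincide with the paper's outline: its step (1) posits a high-probability bound on a latent dissimilarity $\delta(X_d,Y_{\pi(d)})$, which you instantiate concretely (and reasonably) via multinomial concentration / Sanov's theorem around the common mean; its step (2) is exactly your claim that when the inputs are close, the two gradient fields $\hat{\nabla}_\lambda l_d$ and $\hat{\nabla}_\lambda l'_d$ in Equations~\ref{eq:w} and~\ref{eq:w'} are close, which you support through Lipschitz continuity of the local variational fit $(\phi,\gamma)$ in the frequency vector. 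Where you genuinely part ways is the final propagation step. You propose to let each recursion converge almost surely and then compare the two limit points through their stationarity equations plus a local implicit-function / well-conditioning argument. The paper never passes to the two limits separately: its step (3) propagates an iteration-indexed bound, showing that if $D_{KL}(\lambda^{(t)},\lambda'^{(t)})<\Delta_n^{(t)}$ and the gradients differ by at most $\delta'_n$, then $D_{KL}(\lambda^{(t+1)},\lambda'^{(t+1)})<\Delta_n^{(t+1)}$, where the recursion for $\Delta_n^{(t)}$ (driven by $\rho_t$ and $\delta'_n$) is arranged so that $\lim_{t\to\infty}\Delta_n^{(t)}=\Delta_n<\infty$. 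This coupled, iterate-wise bound is precisely the ``contraction coupling'' you mention only as a fallback.

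The distinction matters because the obstacle you flag in your primary route is, in my view, fatal rather than merely delicate. Almost-sure convergence of each SGD sequence guarantees only that each lands at \emph{some} stationary point of a nonconvex objective; closeness of the two gradient fields gives no control over which basins of attraction the two trajectories enter, and a Hessian-conditioning assumption at one limit point is vacuous unless you have already shown both limits lie in the same neighborhood --- which is the very thing to be proved. The iterate-wise coupling avoids this entirely, since the two trajectories are compared at every step and can never silently drift into different basins. I would therefore promote your fallback to the main argument: your steps on concentration and Lipschitz stability then slot directly into the paper's steps (1) and (2), and the remaining open work --- shared by both sketches, and completed by neither --- is to verify that the bound recursion $\Delta_n^{(t+1)}=\epsilon(\Delta_n^{(t)},\rho_t,\delta'_n)$ genuinely converges under the schedule $\rho_t=(\rho_0+t)^{-K}$, with $\Delta_n\to 0$ behavior in $n$ inherited from $\delta'_n$.
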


\begin{proof}[A Potential Validation Approach]
The idea is to (1) find high-probability bound on the co-referent views in the latent semantic space, (2) bound the distance between the two gradients when the co-referent views are nearby in the latent space, and then (3) to prove that the distance between the gradient descent estimates are convergent.

\begin{enumerate}
\item
Prove that there is a function $\delta$ that measures the dissimilarity of the two views in the latent semantic space, and that there is a high probability bound on a pair of co-referent views $X_d$ and $Y_{\pi(d)}$ having latent dissimilarity $\delta$ below a certain threshold that depends on the number $n$ of words in each view. That is, define a suitable choice of a dissimlarity measure $\delta$ for which there is a moderate of choice of $\epsilon_n$ and $\delta_n$ that both depend on $n$ and satisfies
\[
	P\left(\delta\left(X_d, Y_{\pi(d)}\right) < \delta_n\right) = 1 - \epsilon_n.
\]
\item
Prove the following for the topic update procedure for $\lambda'$ and $\lambda$ (Equations~\ref{eq:w'} and~\ref{eq:w}): Given a pair of co-referent views that are close to each other in the latent space, when the SKL distance between the $t$-th iteration of $\lambda$ and $\lambda'$ are within a certain threshold, then the gradients also lie close to each other for the ($t+1$)-th iteration. That is, find a suitable choice of $\delta_\nabla\left(\cdot\right)$ for measuring the difference between two gradients, for which if $\delta\left(X_d, Y_{\pi(d)}\right) < \delta_n$ and $D_{KL}\left(\lambda^{(t)}, \lambda'^{(t)}\right) < \Delta^{(t)}_n$ for some $\Delta^{(t)}_n$, then
\[
	\delta_\nabla\left(\hat{\nabla}_\lambda l_d(\lambda^{(t)}),\hat{\nabla}_\lambda l_d(\lambda'^{(t)})\right) < \delta'_n
\]
for a moderate bound $\delta'_n$ that depends on $n$.
\item
Prove that when the difference between gradients is small, the next step topic estimate is also bounded in a convergent manner. That is, if $D_{KL}\left(\lambda^{(t)}, \lambda'^{(t)}\right) < \Delta^{(t)}_n$ and $\delta_\nabla\left(\hat{\nabla}_\lambda l_d(\lambda^{(t)}),\hat{\nabla}_\lambda l_d(\lambda'^{(t)})\right) < \delta'_n$, then
\[
	D_{KL}\left(\lambda^{(t+1)}, \lambda'^{(t+1)}\right) = D_{KL}\left(\lambda^{(t)} + \rho_t \hat{\nabla}_\lambda l_d(\lambda^{(t)}),\lambda'^{(t)} + \rho_t \hat{\nabla}_\lambda l'_d(\lambda^{(t)})\right) < \Delta^{(t+1)}_n,
\]
where 
\[
	\Delta^{(t+1)}_n = \epsilon(\Delta^{(t+1)}_n, \rho_t, \delta'_n)
\] 
depends on the previous bound $\Delta^{(t)}_n$, the learning rate $\rho_t$, and the difference between gradients $\delta'_n$ in such a way that
\[
	\lim_{t\rightarrow\infty}\Delta^{(t)}_n = \Delta_n < \infty
\]
\end{enumerate}
This will prove Claim~\ref{clm:conv}.
\end{proof}

\subsection{Distance between Posterior Distributions}

The argument in the earlier paragraph provides a bound for the symmetrized KL divergence between the optimal topic estimates of the omniscient Split-Document model and its practical surrogate that considers the information geometry of the mean-field Dirichlet posterior approximation for the topic parameter $\lambda$ - namely the Independent-View model. Symmetrized KL divergence measures the distance between two topic parameters as the Jeffrey's divergence between the Dirichlet distributions that they parameterize. For the purpose of topic estimation, however, the distance of interest is not the distance between the posterior approximations, but rather the distance between the MAP estimates.

This core of thie section is Theorem~\ref{thm:dist}, which provides a bound on the JS distance between the MAP estimates of the two Dirichlet distributions in terms of their symmetrized KL divergence. This theorem is useful when, given only the symmetrized KL distance between two posterior Dirichlet distributions, it is desirable to find the bound on the JS distance between their MAP estimates (modes). 

\begin{subequations}
\begin{theorem}
Let $\mu$ and $\mu'$ each be the modes of $Dir(\eta)$ and $Dir(\eta')$. Let $C_\eta = \sum_w \eta_w - W$,  $C_{\eta'} = \sum_w \eta'_w - W$, and assume $C_\eta < C_{\eta'}$. If $\eta_w, \eta'_w > 1$ for all $w \in \mathcal{W}$, then
\[
	JS(\mu, \mu') < \frac{1}{4C_\eta} D_{KL}(\eta,\eta') + \epsilon,
\]
where $\epsilon \in O\left(\frac{C_{\eta'}}{C^2_\eta} +  \frac{C_\eta - C_{\eta'}}{C_\eta} \ln\left(\frac{C_{\eta}}{C_\eta'}\right)  \right)$ vanishes when $\frac{C_{\eta}}{C_{\eta'}}\rightarrow 1$ and $C_{\eta},C_{\eta'}\rightarrow\infty$.

\label{thm:dkl_js}
\end{theorem}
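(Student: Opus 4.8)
The plan is to route through the \emph{Jeffreys} (symmetrized KL) divergence $J(\mu,\mu') \triangleq KL(\mu\|\mu') + KL(\mu'\|\mu)$ between the two modes, proving two separate links: first, that $D_{KL}(\eta,\eta')$ controls $C_\eta\, J(\mu,\mu')$ up to lower-order corrections, and second, that $JS(\mu,\mu') \le \tfrac14 J(\mu,\mu')$. Composing the two gives $JS(\mu,\mu') \le \tfrac{1}{4C_\eta}D_{KL}(\eta,\eta') + \epsilon$, with the factor $\tfrac14$ entering entirely through the second link. The starting point is the explicit Dirichlet mode under $\eta_w>1$, namely $\mu_w = (\eta_w-1)/C_\eta$ and $\mu'_w = (\eta'_w-1)/C_{\eta'}$, equivalently $\eta_w = C_\eta\mu_w + 1$ with $S_\eta \triangleq \sum_w\eta_w = C_\eta + W$; these substitutions are what let the two distinct $C$-scales enter the final estimate.

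For the first link I would begin from the closed form of the symmetrized KL between two Dirichlets. Writing each one-sided KL via the log Beta-normalizer together with the identity $\Expof_{Dir(\eta)}[\log\theta_w] = \psi(\eta_w) - \psi(S_\eta)$, the normalizers cancel upon symmetrization and leave
\[
	D_{KL}(\eta,\eta') = \sum_w (\eta_w - \eta'_w)\bigl[(\psi(\eta_w) - \psi(S_\eta)) - (\psi(\eta'_w) - \psi(S_{\eta'}))\bigr].
\]
I would then replace each digamma by a logarithm using $\psi(x) = \log x - \tfrac{1}{2x} + O(x^{-2})$, which is legitimate since $\eta_w,\eta'_w > 1$; this expansion is expected to contribute the $O(C_{\eta'}/C_\eta^2)$ family to $\epsilon$. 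Substituting the mode relations turns the leading part into $\sum_w (C_\eta\mu_w - C_{\eta'}\mu'_w)\log(\mu_w/\mu'_w)$, up to a normalization-mismatch term arising from $\eta_w/S_\eta \ne \mu_w$ and from $\log(S_{\eta'}/S_\eta) \approx \log(C_{\eta'}/C_\eta)$; using $\sum_w\mu_w = \sum_w\mu'_w = 1$ to annihilate the first-order contributions, this residue is of order $\tfrac{C_\eta-C_{\eta'}}{C_\eta}\ln\tfrac{C_\eta}{C_{\eta'}}$, the second family in $\epsilon$. Finally, splitting $C_\eta\mu_w - C_{\eta'}\mu'_w = C_\eta(\mu_w-\mu'_w) - (C_{\eta'}-C_\eta)\mu'_w$ recognizes the main term as $C_\eta J(\mu,\mu') + (C_{\eta'}-C_\eta)KL(\mu'\|\mu)$; since $C_{\eta'} > C_\eta$ and $KL \ge 0$, the second summand is nonnegative and is simply dropped, yielding $C_\eta J(\mu,\mu') \le D_{KL}(\eta,\eta') + O(\epsilon)$.

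For the second link I would establish $JS(\mu,\mu') \le \tfrac14 J(\mu,\mu')$. A Fisher-information expansion about the midpoint $m = \tfrac12(\mu+\mu')$ shows $JS = \tfrac14\sum_w (\mu_w-\mu'_w)^2/\mu_w + \cdots$ whereas $J = \sum_w(\mu_w-\mu'_w)^2/\mu_w + \cdots$, so the two agree to leading order with constant exactly $\tfrac14$; I would promote this to the exact inequality by a pointwise/convexity comparison of the two integrands at $m$. Dividing the first-link bound by $C_\eta$ and composing then gives the theorem, with the two remainders assembling into $\epsilon \in O\!\left(\tfrac{C_{\eta'}}{C_\eta^2} + \tfrac{C_\eta - C_{\eta'}}{C_\eta}\ln\tfrac{C_\eta}{C_{\eta'}}\right)$, the first piece vanishing as $C_\eta,C_{\eta'}\to\infty$ and the second as $C_\eta/C_{\eta'}\to 1$.

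The hard part will be bookkeeping the two interacting asymptotic regimes inside the first link: the digamma-to-log step needs $C\to\infty$, whereas the normalization-mismatch step needs $C_\eta/C_{\eta'}\to 1$, and one must verify that their cross terms do not generate an error exceeding the claimed order — in particular that the first-order contributions genuinely cancel through $\sum_w\mu_w = 1$ rather than leaving an $O(C_\eta - C_{\eta'})$ residue, and that the quadratic digamma corrections stay lower-order relative to $C_\eta J$. A secondary difficulty is making $JS \le \tfrac14 J$ rigorous rather than merely asymptotic, since the clean constant $\tfrac14$ is exactly what fixes the stated leading coefficient; should only the asymptotic version hold, its remainder would have to be shown to be absorbed into the same $\epsilon$.
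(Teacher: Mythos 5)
Your proposal follows essentially the same route as the paper's own proof: expand the parameter-space symmetrized KL via digamma-to-logarithm approximations and the mode substitution $\eta_w = C_\eta\mu_w + 1$, arrive at $C_\eta KL(\mu\|\mu') + C_{\eta'}KL(\mu'\|\mu)$ plus the two stated error families, drop the nonnegative excess $(C_{\eta'}-C_\eta)KL(\mu'\|\mu)$ to reach $C_\eta J(\mu,\mu')$, and finish with $JS(\mu,\mu')\le\tfrac14 J(\mu,\mu')$. The only differences are technical: the paper uses the two-sided bounds $\ln(x-1)<\psi(x)<\ln(x+e^{-\gamma}-1)$ together with a Laplace-smoothing lemma (rather than the asymptotic expansion $\psi(x)=\log x - \tfrac{1}{2x}+O(x^{-2})$) to control exactly the ``normalization-mismatch'' bookkeeping you flag as the hard part, and it cites \cite{lin1991divergence} for the $\tfrac14$ inequality instead of re-deriving it.
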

\begin{proof}
See Appendix~\ref{sec:proof_thm_dkl_js}.
\end{proof}
\end{subequations}

Using this theorem we can easily derive the following corollary, which proves that the modes of the two surrogate posterior Dirichlet distributions that are close in terms of the SKL distance are also close in JS distance.
\begin{corollary}
\label{thm:dist}
Given $2K$ Dirichlet distributions, each parameterized by $\eta_{1,...,K}$ and $\eta_{1,...,K}$ with all parameters greater than 1, we have
\[
	\sum_k JS(\mu_k, \mu'_k) < \frac{1}{4C}D_{KL}(\eta, \eta') + \sum_k\epsilon_k, 
\]
where $C = \min_{k\in[K]}\min\{\sum_w\eta_{k,w} - W, \sum_w\eta'_{kw} - W\}$ and $\epsilon_k = O\left(\frac{C_k-C'_k}{C_k}\ln\frac{C_k}{C'_k} + \frac{C'_k}{{C_k}^2}\right)$ for $C'_k = \max\left(\sum_w \eta_{k,w} - W,\sum_w \eta'_{k,w} - W\right)$ and $C_k = \min\left(\sum_w \eta_{k,w} - W,\sum_w \eta'_{k,w} - W\right)$
\end{corollary}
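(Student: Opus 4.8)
The plan is to derive the corollary directly from Theorem~\ref{thm:dkl_js} by applying it to each topic index separately and then summing, using the additivity of the Kullback--Leibler divergence across the independent Dirichlet factors. First I would apply Theorem~\ref{thm:dkl_js} to each of the $K$ pairs $(\eta_k,\eta'_k)$ in turn. For the pair indexed by $k$, the theorem---with the roles arranged so that the \emph{less} concentrated of the two distributions plays the part of the first argument, i.e.\ so that its concentration is the $C_\eta$ in the hypothesis---yields
\[
	JS(\mu_k,\mu'_k) < \frac{1}{4C_k} D_{KL}(\eta_k,\eta'_k) + \epsilon_k,
\]
where $C_k = \min\left(\sum_w \eta_{k,w}-W,\ \sum_w \eta'_{k,w}-W\right)$ is precisely the per-topic quantity named in the corollary and $\epsilon_k$ carries the stated asymptotic order. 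Because $JS$ is symmetric in its two arguments, reordering the pair to meet the hypothesis $C_\eta < C_{\eta'}$ leaves the left-hand side untouched, so this termwise application is legitimate for every $k$ regardless of which Dirichlet is more concentrated.

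Next I would invoke the factorization of the total divergence. Since both the omniscient and the surrogate topic posteriors are mean-field products of independent Dirichlet factors (one per topic), the joint Kullback--Leibler divergence decomposes additively, $D_{KL}(\eta,\eta') = \sum_k D_{KL}(\eta_k,\eta'_k)$. Summing the per-topic inequality over $k\in[K]$ then gives
\[
	\sum_k JS(\mu_k,\mu'_k) < \sum_k \frac{1}{4C_k} D_{KL}(\eta_k,\eta'_k) + \sum_k \epsilon_k.
\]

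The last step is to pull the coefficient out of the sum. Setting $C = \min_{k\in[K]} C_k$, we have $0 < C \le C_k$ for every $k$, hence $\tfrac{1}{4C_k}\le \tfrac{1}{4C}$; combined with the nonnegativity of each $D_{KL}(\eta_k,\eta'_k)$, this lets me replace every coefficient by the single constant $\tfrac{1}{4C}$ while only enlarging the right-hand side, producing $\sum_k \tfrac{1}{4C_k} D_{KL}(\eta_k,\eta'_k) \le \tfrac{1}{4C} D_{KL}(\eta,\eta')$, which is exactly the claimed bound. I do not expect a genuine obstacle here; the one point requiring care---and the crux of a rigorous write-up---is orientation bookkeeping. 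Theorem~\ref{thm:dkl_js} fixes both the direction of the divergence term and the identity of the denominator as the smaller concentration, so for any topic where $\eta_k$ happens to be \emph{more} concentrated than $\eta'_k$ the symmetry-based reordering produces the reversed divergence $D_{KL}(\eta'_k,\eta_k)$ rather than $D_{KL}(\eta_k,\eta'_k)$. To match the single directed $D_{KL}(\eta,\eta')$ in the corollary I would either assume (as holds in the intended application, where the omniscient model is uniformly more or less concentrated than its surrogate across all topics) that the concentration ordering is consistent in $k$, or interpret the divergence accordingly; verifying this consistency, together with the monotone replacement of $C_k$ by $C$, is the only substantive check, and the remainder is routine.
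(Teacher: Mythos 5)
Your proposal is correct and is exactly the derivation the paper intends (the paper offers no separate proof, asserting the corollary follows ``easily'' from Theorem~\ref{thm:dkl_js}): apply the theorem topic-by-topic, sum over $k$, and replace each coefficient $\frac{1}{4C_k}$ by the uniform $\frac{1}{4C}$ using $C \le C_k$ and the nonnegativity of each per-topic divergence. The only place you overcomplicate is the ``orientation bookkeeping'': the quantity $D_{KL}(\eta,\eta')$ in Theorem~\ref{thm:dkl_js} is the symmetrized (Jeffreys) divergence between the Dirichlet parameters, $\sum_w (\eta_w - \eta'_w)\left(\psi(\eta_w) - \psi(\eta'_w)\right)$, which is invariant under swapping $\eta$ and $\eta'$ and decomposes additively across the $K$ independent mean-field factors, so relabeling any pair to satisfy the hypothesis $C_\eta < C_{\eta'}$ leaves both sides of the per-topic bound unchanged and no assumption of a consistent concentration ordering across topics is required.
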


\section{DIMENSIONALITY REDUCTION: \newline Proximity of the Co-Referent Views in the Semantic Space}
\label{sec:dim_red}

Once the topics are learned, LDA-Link computes the optimal topic proportions for each view through a stochastic variational Bayesian approach (Appendix~\ref{sec:topic_conv}). In this section, we attempt to prove that the topic proportions of the coreferent views that are learned through the EM step in the LDA-Link algorithm are likely to be close in the simplex space with high probability. 

To achieve this we first revisit a reasonable simplification of the VB updates suggested in \cite{awasthi2015some} that will simplify our analysis. Since $\Expof_q[\log \theta_{k}] = \exp\{\psi(\gamma_k)\}$ and $\Expof_q[\log \beta_{k,w}] = \exp\{\psi(\lambda_{k,w})\}$, the iterative updates for a particular view in Algorithm~\ref{alg:score} can be rewritten as
\[
\left\{
	\begin{array}{ll}
		\phi_{w,k} &= \frac{\exp\{\psi(\gamma_k) + \psi(\lambda_{k,w})\}}{\sum_k \exp\{\psi(\gamma_k) + \psi(\lambda_{k,w})\}} \\
		\gamma_{k} &= \alpha + N \sum_w \phi_{w,k} P_{w}
	\end{array}
\right.,
\]
where we have omitted the entity index $d$ for simplicity.

Since $x \cdot e^{- \frac{1}{2x} - \frac{1}{12 x^2}} < e^{\psi(x)} < x \cdot e^{-\frac{1}{2x}}$ \cite{qi2016inequality}, we have $\lim_{x\rightarrow\infty} e^{\psi(x)} = x$. Considering this in relation to the $\gamma$ update in Algorithm~\ref{alg:score}, in large document limits where $N \rightarrow \infty$ the update equation becomes
\begin{equation}
	\theta^{t+1}_k = \sum_w P_{dw} \phi^t_{wk} \mbox{, and } 
	\phi^t_{wk} = \frac{\theta^t_k \betahat_{kw}}{\sum_k \theta^t_k \betahat_{kw}},
\label{eq:theta_update}
\end{equation}
where $\theta_k = \frac{\gamma_k}{\sum_l \gamma_k}$ and $\betahat_{kw}=\frac{\exp\{\psi(\lambda_{kw})\}}{\sum_w \exp\{\psi(\lambda_{kw})\}}$. A detailed study of this simplification and its correctness and convergence properties is presented in \cite{awasthi2015some}. We will use this approximation for the rest of this section.

The iterative procedure in Equation~\ref{eq:theta_update} converges at a point $\theta$ and $\beta$ for which 
\begin{equation}
	\theta_k = \sum_w P_{w} \phi_{wk} \mbox{, and } 
	\phi_{wk} = \frac{\theta_k \betahat_{kw}}{\sum_k \theta_k \betahat_{kw}}.
\label{eq:theta_limit}
\end{equation}
This relation implicitly defines a set $\Theta(P)$ of topic proportions $\theta$'s at which the iteration converges for some initial parameters when the empirical distribution (relative frequencies) of words is $P$. The set $\Theta(P)$ includes, but is not limited to, the global optimum of the ELBO.

We need to compute the change in $\theta$'s in $\Theta(P)$ cuased by the difference in the relative frequencies $P$. From a slight variation of Sanov's theorem\cite{unnikrishnan2015asymptotically} we get:
\begin{equation}
	\lim_{N\rightarrow\infty} -\frac{1}{N}\log P\left(JS(P, Q) \geq\lambda\right) \geq \lambda,
\end{equation}
for the relative frequencies $P$ and $Q$ of the views generated from the same distribution, so the two views are close in the simplex space with high probability. Since $\frac{1}{2}|P-Q|^2_1 \leq KL(P\|Q)$ \cite{cover2012elements},
\begin{align}
	\left|P-Q\right| &= \sqrt{2\left|P-\frac{1}{2}\left(P+Q\right)\right|^2 + 2\left|Q-\frac{1}{2}\left(P+Q\right)\right|^2} \notag\\
	&\leq \sqrt{KL\left(P,\frac{1}{2}\left(P+Q\right)\right) + KL\left(Q,\frac{1}{2}\left(P+Q\right)\right)} \notag\\
	&= \sqrt{2 JS(P,Q)},
	\label{eq:l1_JS}
\end{align}
so that if the two relative frquencies are close in the simplex space, they are also close in Euclidean space as well. This allows us to describe differential change in relative frequencies in terms of Euclidean gradients.

Consider a specific choice of $\theta_0 \in\Theta(P_0)$ for a particular empirical distribution $P_0$. We can make the following Taylor approximation to $\hat{\theta}
_0 \in \Theta(P_0 + \Delta P)$ when $P_0 + \Delta P$ is within a small neighborhood of $P_0$:
\begin{equation}
	\Delta \theta = \hat{\theta}_0 - \theta_0 \approx (\nabla \theta|_{(\theta_0, P_0)} )^T \Delta P + \frac{1}{2}\Delta P^T (\nabla^2 \theta|_{(\theta_0, P_0)}) \Delta P,
\end{equation}
where the gradient and the Hessian are computed at $(\theta, P) = (\theta_0, P_0)$.

To compute the gradient and Hessian, we must resort to implicit differentiation.

\paragraph{Frist and Second Order Necessary Conditions at Convergence}

Combining the two equations in Equation~\ref{eq:theta_update} under the limit $t \rightarrow \infty$, we obtain the following necessary conditions for the point of convergence $\theta \triangleq \lim_{t\rightarrow\infty} \theta^t$ after some rearrangement:
\begin{equation}
	\theta_k \left(1 - \sum_w \frac{P_{w} \betahat_{kw}}{\sum_k \theta_k \betahat_{kw}} \right) = 0 \mbox{ , } \forall k\in[K].
	\label{eq:conv_necc}
\end{equation}
 
Taking partial derivatives with respect to $P_v$ and setting $\eta_{kw} = \frac{\betahat_{kw}}{\sum_k \theta_k \betahat_{kw}}$, we obtain the following first order condition:
\begin{proposition}
When $\theta: K \rightarrow [0,1]$ and $P: W \rightarrow [0,1]$ are probability distributions that satisfy Equation~\ref{eq:conv_necc},
\begin{equation}
	(\nabla_v \theta_k) \left(1 - \sum_w P_w \eta_{kw}\right) + \theta_k \left(-\eta_{kv} + \sum_l (\nabla_v \theta_l) \left( \sum_w P_w \eta_{lw} \eta_{kw} \right) \right) = 0, 
	\label{eq:1st_der}
\end{equation}
where $\nabla_v (\cdot) \triangleq \frac{\pd}{\pd P_v}$.
\label{prop:1st_der}
\end{proposition}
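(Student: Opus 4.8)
The plan is to derive Equation~\ref{eq:1st_der} purely by implicit differentiation of the convergence condition in Equation~\ref{eq:conv_necc}, viewing $\theta = \theta(P)$ as a branch of the solution set $\Theta(P)$ that varies smoothly with $P$ in a neighborhood of the converged point. Using the shorthand $\eta_{kw} = \betahat_{kw}/\left(\sum_l \theta_l \betahat_{lw}\right)$ already introduced before the proposition, Equation~\ref{eq:conv_necc} reads $\theta_k\bigl(1 - \sum_w P_w \eta_{kw}\bigr) = 0$ for every $k$. Since this product is identically zero as a function of $P$ along the convergence branch (regardless of whether $\theta_k$ itself vanishes), applying $\nabla_v \triangleq \pd/\pd P_v$ and the product rule gives
\[
	(\nabla_v \theta_k)\left(1 - \sum_w P_w \eta_{kw}\right) + \theta_k\, \nabla_v\!\left(1 - \sum_w P_w \eta_{kw}\right) = 0 .
\]
The first summand is already the leading term of the proposition, so the entire task reduces to evaluating the inner derivative.

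First I would expand the inner derivative, separating the explicit dependence on $P_v$ from the implicit dependence carried through the $\eta_{kw}$ factors:
\[
	\nabla_v\!\left(1 - \sum_w P_w \eta_{kw}\right) = -\sum_w \delta_{vw}\,\eta_{kw} - \sum_w P_w\, (\nabla_v \eta_{kw}) = -\eta_{kv} - \sum_w P_w\, (\nabla_v \eta_{kw}).
\]
The $-\eta_{kv}$ term already accounts for the corresponding term inside the bracket of Equation~\ref{eq:1st_der}, so it remains to compute $\nabla_v \eta_{kw}$. Writing $S_w \triangleq \sum_l \theta_l \betahat_{lw}$ so that $\eta_{kw} = \betahat_{kw}/S_w$, and noting that $\betahat$ is fixed while only $\theta$ depends on $P$, the quotient and chain rules give $\nabla_v \eta_{kw} = -\betahat_{kw}\, S_w^{-2}\, \nabla_v S_w = -\betahat_{kw}\, S_w^{-2}\sum_l (\nabla_v \theta_l)\,\betahat_{lw}$.

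The key algebraic observation, which I expect to be the only nonroutine step, is that the product $\betahat_{kw}\betahat_{lw}/S_w^2$ factors exactly as $\eta_{kw}\eta_{lw}$; recognizing this is precisely what collapses the second-order dependence on $\theta$ back into the $\eta$ notation of the stated form. Substituting it yields $\nabla_v \eta_{kw} = -\sum_l (\nabla_v \theta_l)\,\eta_{kw}\eta_{lw}$, whence $-\sum_w P_w\,\nabla_v \eta_{kw} = \sum_l (\nabla_v \theta_l)\sum_w P_w\,\eta_{lw}\eta_{kw}$. Combining the $-\eta_{kv}$ contribution with this sum reconstructs the bracketed factor multiplying $\theta_k$ in Equation~\ref{eq:1st_der}, which completes the derivation. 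The one subtlety I would verify explicitly is that $S_w$ carries no explicit $P$-dependence, so that every branch of differentiation flows through $\theta$ alone; once that is confirmed the remaining manipulation is entirely mechanical.
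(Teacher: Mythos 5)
Your proposal is correct and follows essentially the same route as the paper's proof: both implicitly differentiate the convergence identity $\theta_k\left(1 - \sum_w P_w \eta_{kw}\right) = 0$ via the product rule, and both hinge on the same key computation that $\nabla_v \eta_{kw} = -\sum_l (\nabla_v \theta_l)\,\eta_{kw}\eta_{lw}$ (the paper obtains this by first computing $\pd \eta_{kw}/\pd \theta_l = -\eta_{kw}\eta_{lw}$ and then applying the chain rule, while you differentiate the quotient $\betahat_{kw}/S_w$ directly, which is the same algebra). No gap to report.
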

\begin{proof}
See Appendix~\ref{sec:proof_prop_1st_der}.
\end{proof}

Note that from Equations~\ref{eq:conv_necc} and~\ref{eq:1st_der}, when $\theta_k = 0$, then $(\nabla_v \theta_k) = 0$ or $\left(1 - \sum_w P_w \eta_{kw}\right) = 0$.

Taking a second partial derivative with respect to $P_u$ we obtain the following second order condition:
\begin{proposition}
When $\theta: K \rightarrow [0,1]$ and $P: W \rightarrow [0,1]$ are probability distributions that satisfy Equation~\ref{eq:conv_necc},
\begin{align}
	(\nabla_v\theta_k)&\left\{-\eta_{ku} + \sum_l(\nabla_u\theta_l)\left(\sum_w P_w\eta_{kw}\eta_{lw}\right)\right\} + (\nabla_u\theta_k)\left\{-\eta_{kv} + \sum_l(\nabla_v\theta_l)\left(\sum_w P_w\eta_{kw}\eta_{lw}\right)\right\} \notag\\
	& + \theta_k\sum_l(\nabla_u\theta_l)\left[\eta_{kv}\eta_{lv} + \eta_{ku}\eta_{lu} - \frac{1}{2}\left\{\sum_w P_w\eta_{kw}\eta_{lw} \left(\sum_m(\nabla_u\theta_m)\eta_{mw}\right) \right\}\right] \notag\\
	& + (\nabla^2_{vu}\theta_k)\left(1-\sum_w P_w\eta_{kw}\right) + \theta_k\sum_l(\nabla^2_{vu}\theta_l)\left(\sum_w P_w\eta_{kw}\eta_{lw}\right) = 0
	\label{eq:2nd_der}
\end{align}
\label{prop:2nd_der}
\end{proposition}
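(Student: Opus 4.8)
The plan is to derive Equation~\ref{eq:2nd_der} by exactly the same device that produced Proposition~\ref{prop:1st_der}, namely one further round of implicit differentiation: I apply $\nabla_u \triangleq \pd/\pd P_u$ to the first-order identity~\ref{eq:1st_der} and collect terms. Because the learned topics $\betahat$ are held fixed throughout the dimension-reduction phase, the quantities $\eta_{kw} = \betahat_{kw}/\sum_l \theta_l \betahat_{lw}$ depend on $P$ only through the implicitly defined map $\theta(P)$. The single ingredient that drives the entire calculation is therefore the chain-rule derivative of $\eta_{kw}$: differentiating the quotient and using $\nabla_u(\sum_l \theta_l \betahat_{lw}) = \sum_m (\nabla_u\theta_m)\betahat_{mw}$ gives
\[
	\nabla_u \eta_{kw} = -\eta_{kw}\sum_m (\nabla_u\theta_m)\,\eta_{mw}.
\]
I would isolate this identity first as a one-line lemma, since every nonlinear term in~\ref{eq:2nd_der} is produced by inserting it.

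With that in hand, I would differentiate~\ref{eq:1st_der} group by group, viewing it as the sum of the factored term $(\nabla_v\theta_k)(1-\sum_w P_w\eta_{kw})$, the source term $-\theta_k\eta_{kv}$, and the coupling term $\theta_k\sum_l(\nabla_v\theta_l)(\sum_w P_w\eta_{lw}\eta_{kw})$, and apply the product rule to each. Differentiating the factored term reproduces the Hessian coefficient $(\nabla^2_{vu}\theta_k)(1-\sum_w P_w\eta_{kw})$ together with the factor $(-\eta_{ku}+\sum_l(\nabla_u\theta_l)\sum_w P_w\eta_{kw}\eta_{lw})$ multiplying $(\nabla_v\theta_k)$; here the explicit $\delta_{uw}$ contribution yields the $\eta_{ku}$ piece while the implicit one, via the lemma, yields the quadratic-in-$\eta$ sum. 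Differentiating the source term produces $-(\nabla_u\theta_k)\eta_{kv}$ and an $\eta_{kv}\eta_{lv}$ piece, and differentiating the coupling term yields the remaining $(\nabla_u\theta_k)(\cdots)$ contribution, the second-order coupling $\theta_k\sum_l(\nabla^2_{vu}\theta_l)(\sum_w P_w\eta_{kw}\eta_{lw})$, and — after the lemma is substituted into $\nabla_u(\sum_w P_w\eta_{lw}\eta_{kw})$ twice — the cubic triple-product terms $\sum_w P_w\eta_{kw}\eta_{lw}\eta_{mw}$.

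The delicate point is that the raw $\nabla_u$-derivative of the $v$-indexed identity~\ref{eq:1st_der} is not manifestly symmetric in $(u,v)$, whereas the displayed form~\ref{eq:2nd_der} mixes $\nabla_u$ and $\nabla_v$ and carries a fractional coefficient on the cubic term. I would therefore present the condition in symmetric form by averaging the $\nabla_u$-derivative of the $v$-identity with the $\nabla_v$-derivative of its $u$-indexed companion, which is legitimate because the mixed partials commute, $\nabla^2_{vu}\theta = \nabla^2_{uv}\theta$ (Schwarz). This symmetrization is what reconciles the $\nabla_u/\nabla_v$ mixing in the single-gradient terms. I expect the main obstacle to be purely the bookkeeping of the cubic terms — keeping the three topic indices $k,l,m$ and the word index $w$ aligned when $\nabla_u\eta$ is inserted into an expression already containing $\nabla_v\theta$ and a product $\eta\eta$, and pinning down the exact coefficient on the triple-product term after symmetrization. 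As in Proposition~\ref{prop:1st_der}, I would also treat the degenerate case $\theta_k = 0$ (equivalently $1-\sum_w P_w\eta_{kw}=0$, cf.\ Equation~\ref{eq:conv_necc}) separately, since several collected terms vanish identically there.
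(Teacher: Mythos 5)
Your core calculation is exactly the paper's proof of Proposition~\ref{prop:2nd_der}: the paper likewise isolates the one-line identity $\nabla_u\eta_{kw}=-\eta_{kw}\sum_m(\nabla_u\theta_m)\eta_{mw}$ (its Equation~\ref{eq:diff_eta}; the final equality there drops the minus sign, but the correct sign is used downstream), then applies the product rule to the same three groups of Equation~\ref{eq:1st_der}, producing Equations~\ref{eq:diff_A} and~\ref{eq:diff_B}, whose sum it declares to be Equation~\ref{eq:2nd_der}. Up to that point your proposal is sound and identical in route to the paper's.

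Your final symmetrization step, however, rests on a misdiagnosis and cannot deliver what you want from it. The raw $\nabla_u$-derivative of the $v$-indexed identity is \emph{already} symmetric in $(u,v)$: the two single-gradient terms $(\nabla_v\theta_k)(-\eta_{ku}+\cdots)$ and $(\nabla_u\theta_k)(-\eta_{kv}+\cdots)$ both arise directly from the product rule with no averaging, the quadratic terms $\theta_k\sum_l(\nabla_u\theta_l)\eta_{kv}\eta_{lv}$ and $\theta_k\sum_l(\nabla_v\theta_l)\eta_{ku}\eta_{lu}$ swap into each other under $u\leftrightarrow v$, and the cubic term
\[
	-2\,\theta_k\sum_w P_w\,\eta_{kw}\Bigl(\sum_l(\nabla_v\theta_l)\eta_{lw}\Bigr)\Bigl(\sum_m(\nabla_u\theta_m)\eta_{mw}\Bigr)
\]
is invariant under $u\leftrightarrow v$ after relabeling $l\leftrightarrow m$. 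Hence averaging with the $\nabla_v$-derivative of the $u$-indexed identity is a no-op; in particular it cannot turn the coefficient $-2$ (one factor of $\nabla_u\eta$ from each of $\eta_{kw}$ and $\eta_{lw}$) into the printed $-\tfrac{1}{2}$, cannot make both gradients inside the cubic term be $\nabla_u$, and cannot attach $\eta_{ku}\eta_{lu}$ to $(\nabla_u\theta_l)$ --- a mixed second-order condition must carry exactly one $\nabla_u$ and one $\nabla_v$ in every gradient-bilinear term. What honest differentiation gives is precisely the sum of Equations~\ref{eq:diff_A} and~\ref{eq:diff_B}; the display in Proposition~\ref{prop:2nd_der} (the $-\tfrac{1}{2}$, the repeated $\nabla_u\theta$, the entire bracket multiplying $(\nabla_u\theta_l)$) differs from that sum and appears to be a transcription error, which the paper's own proof glosses over with ``combining, we obtain.'' You should state and prove the derived bilinear form; any argument claiming to land exactly on Equation~\ref{eq:2nd_der} as printed must contain an error.
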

\begin{proof}
See Appendix~\ref{sec:proof_prop_2nd_der}.
\end{proof}

\section{$k$-RANK LINKAGE: \newline Asymptotic Optimality of the Ranking Given the True Topics}
\label{sec:k_rank_linkage}

In this section, we provide a sketch for proving the theoretical guarantee of the correctness of LDA-Link's linking algorithm. In order to do so, we will first model the problem of finding the co-referent views as a hypothesis testing problem, in light of the approach in \cite{unnikrishnan2015asymptotically}. 

Given a particular $X$ view $X_i$, the objective is to find among all $Y$ views $Y_{j\in[D]}$ the view for which the match $(X_i,Y_j)$ is optimal. We formulate this problem as testing a set of $D$ hypotheses, each of which states that a view in $Y$ is the optimal match for $X_i$ for $D$ different $Y$ views, so that $H_j$ for $j\in[D]$ corresponds to the hypothesis that $\pi(i) = j$. Therefore, finding the correct pair of views is equivalent to finding the most optimla rule for testing the hypotheses $H=\{H_1, H_2, ...,H_D,H_R\}$, where, for the ease of analysis, we have introduced the rejection hypothesis $H_R$ as failing to find a match. Our goal is to compute the bound on the probability of error for the decision rule that links a view with the candidate view whose JS distance in the latent semantic space is minimum.

We will more formally restate the decision rule $\Omega$ designed in our algorithm. Let $\pi^*(i) = \arg\min_{j\in[D]} JS\left(\theta^X_i, \theta^Y_j\right)$, and $\pi'(i) = \arg\min_{j\in[D], j \neq \pi^*(i)} JS\left(\theta^X_i, \theta^Y_j\right)$. The decision rule $\Omega=\{\Omega_1,\Omega_2,...,\Omega_D,\Omega_R\}$, where
\[
	\Omega_{j\in[D]} = \{(X_i,Y_{1,...,D}) | \pi^*=j, \mbox{ and } JS(\theta^X_i,\theta^Y_{\pi'(i)}) \geq \lambda)\}
\]
is the acceptance region for hypothesis $J_j$, and the rejection region is
\[
	\Omega_R = \{(X,Y_{1,...,D}) | JS(\theta^X_i,\theta^Y_{\pi'(i)}) < \lambda\}
\]

The following preliminary theorem, inspired by Theorem IV.3 of \cite{unnikrishnan2015asymptotically}, may be useful for proving the error probability of $\Omega$.

\begin{theorem}
	Consider the hypothesis testing problem with the decision rule given as $\Omega$ defined above. 
	If $JS(X,Y) > \epsilon(\lambda)$ for $X,Y$ such that $JS(\theta^X,\theta^Y) > \lambda$ where $\epsilon$ is an invertible function, then 
	\[
		\lim_{n\rightarrow \infty} -\frac{1}{n}\log P_\Omega \left(error \left|\right. H_j\right) > \epsilon(\lambda),
	\]
	which indicates an exponential decay of error probabilities as a function of $n$.
\label{thm:ranking}
\end{theorem}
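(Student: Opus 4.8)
The plan is to bound the error probability under a fixed true hypothesis $H_j$ by reducing the event that the rule $\Omega$ misclassifies to a large--deviation event for the empirical word distributions, to which the Sanov--type inequality already recorded in Appendix~\ref{sec:dim_red} applies directly. Throughout, I fix $H_j$, so that $X_i$ and $Y_j$ are co-referent and, in the large-document limit, are drawn from a common generating distribution; I write $P^X_i,P^Y_j$ for their empirical distributions and $\theta^X_i,\theta^Y_j$ for the topic proportions obtained at the fixed point of Equation~\ref{eq:theta_limit}. The whole argument lives in two spaces linked by the theorem's geometric hypothesis: the latent simplex, where the margin $\lambda$ is imposed, and the word simplex, where concentration is quantified.

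First I would decompose the error event. Given $H_j$, correct acceptance $\Omega_j$ holds whenever $JS(\theta^X_i,\theta^Y_j) < \lambda \le \min_{j'\neq j} JS(\theta^X_i,\theta^Y_{j'})$, since then $Y_j$ is strictly the nearest view and its runner-up meets the margin required by $\Omega_j$. Taking contrapositives, the error event is contained in the union
\[
\{JS(\theta^X_i,\theta^Y_j)\ge \lambda\}\ \cup\ \{\min_{j'\neq j} JS(\theta^X_i,\theta^Y_{j'}) < \lambda\}.
\]
The first set is the co-referent \emph{miss} (the true pair drifts apart in the latent space); the second is the non-co-referent \emph{false alarm} (a competitor drifts within the margin, forcing a rejection into $\Omega_R$). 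I would argue that the leading-order contribution to the exponent is the miss event and absorb the finite number of competing hypotheses by a union bound over the $D$ alternatives, which costs only an additive $\tfrac{1}{n}\log D \to 0$ and therefore leaves the exponent unchanged.

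Next I would transport the miss event from the latent space to the word space using the hypothesis of the theorem. Since the assumption states that $JS(\theta^X,\theta^Y)>\lambda$ implies $JS(X,Y)>\epsilon(\lambda)$, applying it to the co-referent pair yields the containment
\[
\{JS(\theta^X_i,\theta^Y_j)\ge \lambda\}\ \subseteq\ \{JS(P^X_i,P^Y_j)\ge \epsilon(\lambda)\}.
\]
Because $X_i$ and $Y_j$ are generated from the same distribution under $H_j$, the Sanov--type inequality of Appendix~\ref{sec:dim_red}, taken at threshold $\epsilon(\lambda)$, gives
\[
\lim_{n\to\infty}-\tfrac{1}{n}\log P\!\left(JS(P^X_i,P^Y_j)\ge \epsilon(\lambda)\right)\ \ge\ \epsilon(\lambda).
\]
Combining the decomposition, the union bound, and this inequality delivers $\lim_{n\to\infty}-\tfrac{1}{n}\log P_\Omega(\text{error}\mid H_j)\ge \epsilon(\lambda)$, the claimed exponential decay. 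The invertibility of $\epsilon$ is exactly what guarantees that the latent margin $\lambda$ corresponds to a unique positive word-space threshold $\epsilon(\lambda)$, so that the translation of thresholds is unambiguous and monotone.

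The hard part will be the false-alarm term $\{\min_{j'\neq j} JS(\theta^X_i,\theta^Y_{j'}) < \lambda\}$: the theorem's assumption constrains only the word-versus-topic geometry, not the inter-entity separation in the latent space, so bounding the probability that some non-co-referent $Y_{j'}$ drifts within $\lambda$ of $X_i$ requires the additional premise that distinct entities have true topic proportions separated by strictly more than $\lambda$ (otherwise a rejection is unavoidable and is not genuinely an error). A second subtlety, which ties this section back to Appendix~\ref{sec:dim_red}, is justifying that the finite-$n$ map $P\mapsto\theta$ is continuous and consistent enough that concentration of $P^X_i,P^Y_j$ transfers to concentration of $\theta^X_i,\theta^Y_j$; the first- and second-order sensitivity relations of Propositions~\ref{prop:1st_der} and~\ref{prop:2nd_der} are precisely the ingredients needed to turn the theorem's geometric hypothesis into a derived fact rather than a postulate.
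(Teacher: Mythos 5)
There is a genuine gap, and it is exactly the one you flag at the end of your proposal: your decomposition counts rejection as an error, and under that convention the theorem cannot be proved from its stated hypotheses (indeed it is false in general). The paper's error event under $H_j$ is $\cup_{k\neq j}\Omega_k$ --- wrong acceptance only; outcomes falling in $\Omega_R$ are not errors. Your ``false alarm'' set $\{\min_{j'\neq j} JS(\theta^X_i,\theta^Y_{j'}) < \lambda\}$ has probability that does not decay with $n$ at all: the topic proportions of distinct entities are drawn independently from a Dirichlet prior, so with probability bounded away from zero uniformly in $n$, some non-co-referent entity has true topic proportion within $\lambda$ of entity $i$'s, and no amount of per-view concentration then makes $JS(\theta^X_i,\theta^Y_{j'}) < \lambda$ unlikely. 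The union bound you invoke cannot rescue this, because the $D-1$ individual terms being summed are $\Theta(1)$ rather than exponentially small; the additive $\tfrac{1}{n}\log D$ correction is beside the point. This is precisely why your route needs the extra separation premise you mention (distinct entities' proportions at least $\lambda$ apart), which the theorem does not grant.

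The paper avoids the issue with one structural observation about the decision rule that your decomposition misses: if $\Omega$ accepts some $k \neq j$, then $Y_j$ is not the nearest view, so the margin condition built into $\Omega_k$ (runner-up distance at least $\lambda$) forces $JS(\theta^X_i,\theta^Y_j) \geq \lambda$. Hence $\Omega_k \subset \tilde{\Omega}_j \triangleq \{JS(\theta^X_i,\theta^Y_j)\geq\lambda\}$ for every $k\neq j$, and the entire error event is contained in $\tilde{\Omega}_j$, which concerns only the co-referent pair --- the false-alarm term never appears. From there the argument is the one you give for your ``miss'' term: transport the latent margin to the word simplex via the hypothesis that $JS(\theta^X,\theta^Y)>\lambda$ implies $JS(X,Y)>\epsilon(\lambda)$, then apply a Sanov-type bound; the paper does this by the method of types (Lemmas~\ref{lem:prob_ent} and~\ref{lem:ent_sum_bound}), bounding $P(X_i,Y_j)\leq 2^{-n(H(X_i)+H(Y_j)+JS(X_i,Y_j))}$ on $\tilde{\Omega}_j$ and summing over types to pick up only a polynomial factor $(n+1)^{W(D+1)}$, which vanishes from the exponent. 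So your treatment of the miss event is essentially the paper's whole proof; the correct fix is to adopt the paper's error event and discard the false-alarm term, not to try to bound it.
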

\begin{proof}
See Appendix~\ref{sec:proof_thm_ranking}.
\end{proof}

\paragraph{Outline of an Approach Using Chernoff Bounds}

Here we outline a different approach to finding error probability bounds using Chernoff bounds.

Consider a fixed view $X_d$, its co-referent view $Y_{\pi(d)}$, and the remaining $D-1$ views $Y_{d'\neq \pi(d)}$. A $k$-ranking error occurs if more than $k$ views among $Y_{d'\neq\pi(d)}$ has 
\[
	JS\left(\theta^X_d, \theta^Y_{d'}\right) < JS\left(\theta^X_d, \theta^Y_{\pi(d)}\right).
\]

Let us for now fix $Y_{\pi(d)}$ and let $\theta^Y_{\pi(d)} = Q$ be its topic proportion. From the graphical model shown in Figure~\ref{fig:split_doc} we see that $Y_{d'\neq \pi(d)}$'s are i.i.d. and each of them is pair-wise independent from $X_i$. Since topic proportions $\theta^X$ and $\theta^Y$ are functions of $Y$, the topic proportions $\theta^Y_{d'\neq\pi(d)}$ are also i.i.d and each of them is indepdent from $\theta^X_d$. Defining for each $d'$ a binary random variable $I_{d'}$ as
\begin{equation}
	I_{d'} = 
	\begin{cases}
		1 & \text{if } JS\left(\theta^X_d, \theta^Y_{d'}\right) < JS\left(\theta^X_d, Q\right) \\
		0 & \text{otherwise}
	\end{cases},
\end{equation}
if $Q$ is such that $I_{d'}$ satisfies $p(Q) = P\left(I_{d'}\left|\theta^Y_{\pi(d)} = Q\right.\right) \leq \frac{k}{D-1}$, we can apply the Chernoff bound to obtain the following bound on error probability:
\begin{multline}
	P\left(error\left|\theta^Y_{\pi(d)} = Q\right.\right) \\
	= P\left(\sum_{d'\neq \pi(d)} I_{d'} \geq k\right) = P\left(\frac{1}{D-1}\sum_{d'\neq \pi(d)} I_{d'} \geq \frac{k}{D-1}\right) < \exp\left(-\frac{(\delta(Q) - 1)^2 p(Q)}{2}\right)
\end{multline}
, where $\delta(Q) = \frac{k/(D-1)}{p(Q)}$. Then the total error probability becomes,
\begin{align*}
	P(error) =& \int P\left(error\left|\theta^Y_{\pi(d)} = Q\right.\right) P\left(\theta^Y_{\pi(d)} = Q\right) dQ \\
	=& \int_{p(Q) < \frac{k}{D-1}} P\left(error\left|\theta^Y_{\pi(d)} = Q\right.\right) P\left(\theta^Y_{\pi(d)} = Q\right) dQ \\
	&+ \int_{p(Q) \geq \frac{k}{D-1}} P\left(error\left|\theta^Y_{\pi(d)} = Q\right.\right) P\left(\theta^Y_{\pi(d)} = Q\right) dQ \\
	\leq& \int_{p(Q) < \frac{k}{D-1}} e^{-\frac{(\delta - 1)^2 p(Q)}{2}} P\left(\theta^Y_{\pi(d)} = Q\right) dQ \\
	&+ \int_{p(Q) \geq \frac{k}{D-1}} P\left(error\left|\theta^Y_{\pi(d)} = Q\right.\right) P\left(\theta^Y_{\pi(d)} = Q\right) dQ.
\end{align*}

If we can (1) find a closed form approximation of $p(Q)$, (2) bound the product $P\left(\theta^Y_{\pi(d)} = Q\right) P\left(error\left|\theta^Y_{\pi(d)} = Q\right.\right)$ when $p(Q) > \frac{k}{D-1}$ and (3) bound $P\left(\theta^Y_{\pi(d)} = Q\right)$ when $p(Q) \geq \frac{k}{D-1}$, then an error probability bound shall be obtainable.

\section{PROOFS AND DERIVATIONS}
\label{sec:proofs}

\subsection{Derivation of Equation~\ref{eq:gradient}}
\label{sec:proof_eq_gradient}
Following the steps in \cite{hoffman2010online}, we define a locally maximized per-document ELBO $l_i'(\lambda)$ for which we set the variational parameters $\phi^X$, $\phi^Y$, $\gamma^X$ and $\gamma^Y$ to their local optimum $\phi^X(\lambda)$, $\phi^Y(\lambda)$, $\gamma^X(\lambda)$ and $\gamma^Y(\lambda)$, so that
\[
	l_d'(\lambda) = l(X_d,\phi^X_d(\lambda),\gamma^X_d(\lambda),\lambda) + l(Y_{\pi(d)},\phi^Y_{\pi(d)}(\lambda),\gamma^Y_{\pi(d)}(\lambda),\lambda)
\]
We focus on the first term. Since the variational parameters other than $\lambda$ are set to their local optimum, 
\[
	\hat{\nabla}_\lambda\left(\phi^X_d(\lambda),\gamma^X_d(\lambda)\right) = \vec{0}.
\]
By applying chain rule we get 
\begin{align*}
	&\hat{\nabla}_\lambda l(X_d,\phi^X_d(\lambda),\gamma^X_d(\lambda),\lambda) \\
	=& \left(
			\left.
				\hat{\nabla}_\lambda l(X_d,\phi^X_d,\gamma^X_d,\lambda)
			\right|_{\phi^X_d = \phi^X_d(\lambda), \gamma^X_d = \gamma^X_d(\lambda)}
		\right) \\
	&+ \left(\hat{\nabla}_\lambda(\phi^X_d(\lambda),\gamma^X_d(\lambda))\right)^T 
		\left(
			\left. 
				\hat{\nabla}_{(\phi^X,\gamma^X)}l(X_d,\phi^X,\gamma^X,\lambda)
			\right |_{\phi^X=\phi^X_d(\lambda), \gamma^X=\gamma^X_d(\lambda)}
		\right) \\
	=& \left(
			\left.
				\hat{\nabla}_\lambda l(X_d,\phi^X_d,\gamma^X_d,\lambda)
			\right|_{\phi^X_d = \phi^X_d(\lambda), \gamma^X_d = \gamma^X_d(\lambda)}
		\right) + \vec{0},
\end{align*}
The $Y$-view term can be computed similarly. Therefore, 

\begin{align*}
	&\hat{\nabla}_\lambda l'_d = \hat{\nabla}_\lambda l(X_d,\phi^X_d(\lambda),\gamma^X_d(\lambda),\lambda)
		+ \hat{\nabla}_\lambda l(Y_{\pi(d)},\phi^Y_{\pi(d)}(\lambda),\gamma^Y_{\pi(d)}(\lambda),\lambda)\\
	=& 
		\left(
			\left.
				\hat{\nabla}_\lambda l(X_d,\phi^X_d,\gamma^X_d,\lambda)
			\right|_{\phi^X_d = \phi^X_d(\lambda), \gamma^X_d = \gamma^X_d(\lambda)}
		\right) \\
	&+
		\left(
			\left.
				\hat{\nabla}_\lambda l(Y_{\pi(d)},\phi^Y,\gamma^Y,\lambda)
			\right|_{\phi^Y=\phi^Y_{\pi(d)}(\lambda), \gamma^Y=\gamma^Y_{\pi(d)}(\lambda)} 
		\right)\\
	=&\hat{\nabla}_\lambda\left(\sum_w X_{d,w} \sum_k \phi^X_{d,w,k}(\lambda) \Expof_\lambda[\log\beta_{k,w}]\right)
	+\hat{\nabla}_\lambda\left(\sum_w Y_{{\pi(d)},w} \sum_k \phi^Y_{{\pi(d)},w,k}(\lambda) \Expof_\lambda[\log\beta_{k,w}]\right) + \frac{1}{D}f(\lambda),
\end{align*}

\subsection{Proof of Theorem~\ref{thm:dkl_js}}
\label{sec:proof_thm_dkl_js}
We start with two helper lemmas.

\begin{subequations}
\begin{lemma}
	Let $P$ and $Q$ are probability distributions over the set of $W$ words and $\supp(Q)\subseteq \supp(P)$, and let $Q^*$ be a Laplce smoothing of $Q$, $Q^*_w = \frac{Q_w + C}{1 + WC}$. Then
	\begin{equation}
		KL(P \| Q) - KL(P \| Q^*) \geq \ln\frac{1+WQ}{1+C/q},
	\label{eq:smoothing}
	\end{equation}
	where $q \triangleq \min_{w\in \supp(Q)} Q_w$. Alternatively,
	\begin{equation}
		\Expof_P \left[ \ln \frac{Q_w+C}{1+WC} \right] \leq \Expof_P \left[ \ln Q_w \right] - \ln\frac{1+WQ}{1+C/q}
	\label{eq:exp_ineq}
	\end{equation}
	\label{lem:smoothing}
\end{lemma}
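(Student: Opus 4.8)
The plan is to collapse both displayed inequalities onto a single exact identity obtained by subtracting the two divergences coordinate-by-coordinate. Writing $KL(P\|Q)=\sum_w P_w\ln P_w-\sum_w P_w\ln Q_w$ and the analogous expansion for $Q^*$, the shared entropy term $\sum_w P_w\ln P_w$ cancels, so that
\[
  KL(P\|Q)-KL(P\|Q^*)=\sum_w P_w\ln\frac{Q^*_w}{Q_w}=\sum_w P_w\ln\frac{Q_w+C}{(1+WC)\,Q_w}.
\]
Splitting the logarithm and pulling out the $w$-independent factor $1+WC$ (whose coefficient $\sum_w P_w$ is $1$) then yields the clean form
\[
  KL(P\|Q)-KL(P\|Q^*)=\sum_w P_w\ln\!\left(1+\frac{C}{Q_w}\right)-\ln(1+WC),
\]
and everything that remains is an elementary estimate of the single sum $\sum_w P_w\ln(1+C/Q_w)$.

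First I would clear away the support bookkeeping. Since the hypothesis is only $\supp(Q)\subseteq\supp(P)$, there may be coordinates with $P_w>0$ but $Q_w=0$; for any such $w$ the summand equals $+\infty$ (equivalently $KL(P\|Q)=+\infty$), so the bound is vacuously true and the interesting regime is $\supp(P)=\supp(Q)$. In that regime the sum runs over $w\in\supp(Q)$, every $Q_w$ lies in $[q,1]$ with $q=\min_{w\in\supp(Q)}Q_w$, and $t\mapsto\ln(1+C/t)$ is decreasing, so $q\le Q_w\le 1$ gives the per-term sandwich $\ln(1+C)\le\ln(1+C/Q_w)\le\ln(1+C/q)$. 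Averaging against $P$ and using $\sum_w P_w=1$ transfers the sandwich to the whole sum, and substituting into the identity produces
\[
  \ln\frac{1+C}{1+WC}\;\le\;KL(P\|Q)-KL(P\|Q^*)\;\le\;\ln\frac{1+C/q}{1+WC}.
\]
The upper estimate is the content of interest: rewriting $\ln\frac{1+C/q}{1+WC}=-\ln\frac{1+WC}{1+C/q}$ reproduces Eq.~\ref{eq:exp_ineq} once $\Expof_P[\ln Q_w]$ is moved to the right, and Eq.~\ref{eq:smoothing} is the same bound stated directly for the divergence difference.

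The second displayed inequality is thus a rearrangement rather than a separate argument: because $\Expof_P[\ln Q^*_w]-\Expof_P[\ln Q_w]$ is exactly $KL(P\|Q)-KL(P\|Q^*)$, I would simply restate the bound above with $\Expof_P[\ln Q^*_w]=\Expof_P\!\left[\ln\frac{Q_w+C}{1+WC}\right]$ on the left. I do not expect a genuine analytic obstacle here, since the inequality is driven entirely by monotonicity of $\ln(1+C/\cdot)$ on $[q,1]$. The only points demanding care are (i) separating the degenerate, infinite-divergence case before invoking $Q_w\ge q$, as $q$ is a minimum over $\supp(Q)$ and not over all of $[W]$, and (ii) keeping the direction of the inequality consistent between the expectation form and the divergence form; matching the precise printed constants is then pure bookkeeping.
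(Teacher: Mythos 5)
Your core derivation follows the same route as the paper's own proof: expand both divergences so the $\sum_w P_w\ln P_w$ terms cancel, reduce the difference to $\Expof_P\left[\ln\left(1+C/Q_w\right)\right]-\ln(1+WC)$, and bound it using the monotonicity of $t\mapsto\ln(1+C/t)$ together with $Q_w\geq q$. Your algebra is sound, and in fact your signed identity $KL(P\|Q)-KL(P\|Q^*)=\Expof_P[\ln Q^*_w]-\Expof_P[\ln Q_w]$ is the careful one: the paper's proof opens by writing this difference backwards, as $\Expof_P[\ln Q_w]-\Expof_P[\ln Q^*_w]$. Your upper bound, rearranged, does correctly give Eq.~\ref{eq:exp_ineq} (reading the typo $1+WQ$ as $1+WC$), which is the form actually invoked later in the proof of Theorem~\ref{thm:dkl_js}.

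The gap is in your closing claim that Eq.~\ref{eq:smoothing} ``is the same bound stated directly for the divergence difference.'' It is not. You proved the upper bound $KL(P\|Q)-KL(P\|Q^*)\leq\ln\frac{1+C/q}{1+WC}$, whereas Eq.~\ref{eq:smoothing} as printed asserts the lower bound $KL(P\|Q)-KL(P\|Q^*)\geq\ln\frac{1+WC}{1+C/q}$; an upper bound by $-x$ is not a lower bound by $x$, and the printed statement is in fact false. Concretely, take $W=3$, $C=1$, $Q=(1/2,1/4,1/4)$, $P=(0.98,0.01,0.01)$: then $KL(P\|Q)-KL(P\|Q^*)\approx-0.277$ while $\ln\frac{1+WC}{1+C/q}=\ln(4/5)\approx-0.223$, violating Eq.~\ref{eq:smoothing}. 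The correct divergence form is the reversed one, $KL(P\|Q^*)-KL(P\|Q)\geq\ln\frac{1+WC}{1+C/q}$, which is exactly what your bound and Eq.~\ref{eq:exp_ineq} encode; the paper arrives at the printed form only because of the sign error noted above, so you should have flagged the misstatement rather than asserted agreement --- this ``bookkeeping'' is precisely where the statement and its proof disagree. Relatedly, your degenerate-case remark is backwards for your own bound: if some $w$ has $P_w>0$ and $Q_w=0$, then $KL(P\|Q)=+\infty$ makes the printed lower bound vacuously true but makes your upper bound and Eq.~\ref{eq:exp_ineq} false (the right-hand side of the latter becomes $-\infty$). So the hypothesis your argument genuinely needs is $\supp(P)\subseteq\supp(Q)$ --- the reverse of the stated $\supp(Q)\subseteq\supp(P)$ --- not merely a degenerate case that can be set aside as harmless.
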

\begin{proof}
	\begin{align*}
		&KL(P \| Q) - KL(P \| Q^*) =  \Expof_P \left[ \ln Q_w \right] - \Expof_P \left[ \ln \frac{Q_w+C}{1+WC} \right] \\
		=&  \Expof_P \left[\ln \frac{1 + WC}{1 + C/Q_w}\right] \geq \Expof_P \left[\ln \frac{1 + WC}{1 + C/q}\right] = \ln \frac{1 + WC}{1 + C/q} \notag
	\end{align*}
\end{proof}
\end{subequations}

Note that $q \leq \frac{1}{W}$ by definition, and the RHS of Equation~\ref{eq:smoothing} is always non-positive. The bound given in Lemma~\ref{lem:smoothing} gets closer to 0 as $q\rightarrow \frac{1}{W}$ and $C\rightarrow 0$. 

\begin{lemma}
	Let $a$ and $b$ be positive real numbers. If $a>b$, then 
	\[
		\ln\left(\frac{x+b}{x+a}\right) = O\left(\frac{1}{x}\right)
	\]
	\label{lem:big_o}
\end{lemma}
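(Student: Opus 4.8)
The plan is to control the magnitude of $\ln\!\left(\frac{x+b}{x+a}\right)$ by a constant multiple of $1/x$ as $x\to\infty$. Since $a>b>0$, the argument of the logarithm lies in $(0,1)$, so the logarithm is negative; it therefore suffices to bound its absolute value, which equals $\ln(x+a)-\ln(x+b)$, by something of order $1/x$.

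The cleanest route I would take is the Mean Value Theorem. Applying it to the function $t\mapsto \ln t$ on the interval $[x+b,\,x+a]$ yields
\[
	\ln(x+a)-\ln(x+b) = \frac{a-b}{\xi}
\]
for some $\xi\in(x+b,\,x+a)$. Because $\xi > x+b > x$, the right-hand side is at most $\frac{a-b}{x}$, and hence
\[
	\left|\ln\!\left(\frac{x+b}{x+a}\right)\right| \le \frac{a-b}{x},
\]
which is exactly the assertion $\ln\!\left(\frac{x+b}{x+a}\right)=O\!\left(\frac1x\right)$, with the implied constant $a-b$.

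As an alternative I would rewrite $\frac{x+b}{x+a}=1-\frac{a-b}{x+a}$ and invoke the standard inequality $|\ln(1-u)|\le \frac{u}{1-u}$ valid for $u\in[0,1)$; taking $u=\frac{a-b}{x+a}$ gives $1-u=\frac{x+b}{x+a}$ and reproduces the same bound $\frac{a-b}{x+b}\le\frac{a-b}{x}$. Either way the argument is short, and the only points requiring care are purely bookkeeping: confirming that the fraction stays below $1$ (so the logarithm is well defined and negative) and that the chosen intermediate point or denominator is bounded below by $x$ so the constant is uniform. There is no substantive obstacle here; the lemma is a routine asymptotic estimate whose role is to absorb logarithmic ratio terms such as $\ln(C_\eta/C_{\eta'})$ appearing in the error term of Theorem~\ref{thm:dkl_js}.
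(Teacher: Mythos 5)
Your proof is correct, but it takes a genuinely different route from the paper's. The paper proves Lemma~\ref{lem:big_o} by expanding $\ln\left(1 - \frac{a-b}{x+a}\right)$ as a power series, bounding $\frac{1}{x^n} \le \frac{1}{M^{n-1}x}$ term by term for $x > M > a-b$, and re-summing the series to obtain the implied constant $C = \left|M\ln\left(1 - \frac{a-b}{M}\right)\right|$, which is valid only beyond the threshold $M$ and depends on the choice of $M$. Your Mean Value Theorem argument (and equally your alternative via $|\ln(1-u)| \le \frac{u}{1-u}$) is more elementary and gives a strictly better result: the bound $\left|\ln\left(\frac{x+b}{x+a}\right)\right| \le \frac{a-b}{x}$ holds uniformly for all $x > 0$, with no threshold, and the constant $a-b$ is sharper than the paper's (which always exceeds $a-b$ and only approaches it as $M \to \infty$). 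Since the lemma is used in Theorem~\ref{thm:dkl_js} purely as an asymptotic absorption device, either proof suffices, but yours is shorter, avoids convergence bookkeeping for the series, and makes the implied constant explicit and optimal in this form; the paper's approach buys nothing extra here beyond being self-contained in elementary series manipulations.
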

\begin{proof}
	Our goal is to find positive real numbers $M$ and $C$ that satisfy
	\[
		\left|\ln\left(\frac{x+b}{x+a}\right)\right| \leq C \left|\frac{1}{x}\right| \mbox{ for } x>M.
	\]

	For any positive real number $M > a-b$, if $x>M$ then 
	\begin{align*}
		&\left|\ln\left(\frac{x+b}{x+a}\right)\right| = \left|\ln\left(1 - \frac{a-b}{x+a}\right)\right| 
		= \sum_{n=1}^{\infty} \frac{1}{n} \left(\frac{a-b}{x+a}\right)^n < \sum_{n=1}^{\infty} \frac{(a-b)^n}{n} \frac{1}{x^n} \\
		<& \sum_{n=1}^\infty \frac{(a-b)^n}{n} \frac{1}{M^{n-1}x} = \frac{M}{x} \left( \sum_{n=1}^\infty \frac{1}{n} \left(\frac{a-b}{M}\right)^n \right) = \left|M\ln\left(1 - \frac{a-b}{M}\right)\right| \frac{1}{x}.
	\end{align*}
	Therefore, $C=\left|M\ln\left(1 - \frac{a-b}{M}\right)\right|$ satisfies the desired condition for any positive real number $M>a-b$ and the lemma holds.
\end{proof}

Now we begin the proof of Theorem~\ref{thm:dkl_js}
\begin{proof}
Let $C_\eta = \sum_w \eta_w - W$ and $C_{\eta'} = \sum_w \eta'_w - W$. Assume w.l.o.g that $C_\eta < C_{\eta'}$. The mode of $Dir(\eta)$ and $Dir(\eta')$ are $\mu= (\frac{\eta_1 - 1}{C_\eta},...,\frac{\eta_W - 1}{C_\eta})$ and $\mu' = (\frac{\eta'_1 - 1}{C_{\eta'}},...,\frac{\eta'_W - 1}{C_{\eta'}})$, respectively.
	\begin{align}
		D_{KL}(\eta,\eta') =& \sum_w (\eta_w - \eta'_w)\left(\psi(\eta_w) - \psi(\eta'_w)\right) \notag \\
		=& \sum_w (\eta_w - 1) \left(\psi(\eta_w) - \psi(\eta'_w)) + \sum_w (\eta'_w - 1) (\psi(\eta'_w) - \psi(\eta_w)\right) \notag \\
		>& \sum_w (\eta_w - 1)\left(\ln(\eta_w - 1) - \ln(\eta'_w + e^{-\gamma} - 1)\right) \notag \\
		& + \sum_w (\eta'_w - 1)\left(\ln(\eta'_w - 1) - \ln(\eta_w + e^{-\gamma} - 1)\right) \label{eq:raw}
	\end{align}
, where $\gamma$ is the Euler-Mascheroni constant. Inequality~\ref{eq:raw} follows from  $\ln(x - 1) < \psi(x) < \ln(x + e^{-\gamma} - 1)$. Let $C' = \frac{e^{-\gamma}}{C_{\eta'}}$ and consider the first term in the last inequality:
	\begin{align}
		\MoveEqLeft[3] \sum_w (\eta_w - 1)\left(\ln(\eta_w - 1) - \ln(\eta'_w + e^{-\gamma} - 1)\right) \notag \\
		=& C_\eta \sum_w \mu_w \left((\ln\mu_w + \ln C_\mu) - \left(\ln\frac{(\eta'_w - 1) + e^{-\gamma}}{C_{\eta'} + We^{-\gamma}} + \ln(C_{\eta'} + We^{-\gamma})\right)\right) \notag\\
		=& C_\eta \left(\Expof_\mu \left[\ln\mu_w\right] - \Expof_\mu\left[\frac{\mu'_w + (e^{-\gamma}/C_{\eta'})}{1 + W(e^{-\gamma}/C_{\eta'})/\mu'_{\min}}\right] + \ln C_\eta - \ln(C_{\eta'} + We^{-\gamma})\right) \notag\\
		=& C_\eta\left(\Expof_\mu \left[\ln\mu_w\right] - \Expof_\mu\left[\frac{\mu'_w + (e^{-\gamma}/C_{\eta'})}{1 + W(e^{-\gamma}/C_{\eta'})/\mu'_{\min}}\right] + \ln \left(\frac{C_\eta}{C_{\eta'}}\right) - \ln\left(1 + W\frac{e^{-\gamma}}{C_{\eta'}}\right) \right) \notag\\
		\geq& C_\eta \left( \Expof_\mu \left[\ln\mu_w\right] - \Expof_\mu\left[\ln\mu'_w\right] + \ln\left(\frac{1 + W(e^{-\gamma}/C_{\eta'})}{1 + (e^{-\gamma}/C_{\eta'})/\mu'_{\min}}\right) + \ln \left(\frac{C_\eta}{C_{\eta'}}\right) - \ln\left(1 + W\frac{e^{-\gamma}}{C_{\eta'}}\right) \right) \label{eq:kl_ineq}\\
		=& C_\eta \left( KL\left(\mu \| \mu'\right) + \ln \left(\frac{C_\eta}{C_{\eta'}}\right) - \ln\left(1 + W\frac{e^{-\gamma}}{C_{\eta'}}\right) \right) \notag\\
		>& C_\eta \left( KL\left(\mu \| \mu'\right) + \ln \left(\frac{C_\eta}{C_{\eta'}}\right) - W\frac{e^{-\gamma}}{C_{\eta'}})\right) \label{eq:ln_ineq}  \\
		=& C_\eta KL\left(\mu \| \mu' \right) + C_\eta\ln\left(\frac{C_{\eta'} + We^{-\gamma}}{C_{\eta'} + e^{-\gamma}/\mu'_{\min}}\right) + C_\eta \ln\left(\frac{C_\eta}{C_{\eta'}}\right) - We^{-\gamma}\frac{C_\eta}{C_{\eta'}} \notag
	\end{align}
Inequalities~\ref{eq:kl_ineq} and ~\ref{eq:ln_ineq} follow from Inequality~\ref{eq:exp_ineq} of Lemma~\ref{lem:smoothing} and $\ln(x+1) < x$ for $x>0$. Combining these results,
	\begin{align}
		D_{KL}(\eta,\eta') >& 
			\left(C_\eta KL\left(\mu \| \mu' \right) + C_\eta\ln\left(\frac{C_{\eta'} + We^{-\gamma}}{C_{\eta'} + e^{-\gamma}/\mu'_{\min}}\right) + C_\eta \ln\left(\frac{C_\eta}{C_{\eta'}}\right) - We^{-\gamma}\frac{C_\eta}{C_{\eta'}}\right) \notag\\
			&+ \left(C_{\eta'} KL\left(\mu' \| \mu \right) + C_{\eta'}\ln\left(\frac{C_{\eta} + We^{-\gamma}}{C_{\eta} + e^{-\gamma}/\mu_{\min}}\right) + C_{\eta'} \ln\left(\frac{C_{\eta'}}{C_\eta}\right) - We^{-\gamma}\frac{C_{\eta'}}{C_\eta}\right)\notag\\	
			=& C_\eta KL\left(\mu \| \mu'\right) + C_{\eta'} KL\left(\mu' \| \mu\right) + \epsilon(C_\eta, C_{\eta'}, \mu_{\min}, \mu'_{\min})\notag\\
			\geq& C_\eta \left\{ J(\mu, \mu') + \epsilon(C_\eta,C_{\eta'},\mu_{\min},\mu'_{\min}) \right\} \label{eq:kl_and_j}
	\end{align}
where
	\begin{align}
		&\epsilon(C_\eta,C_{\eta'},\mu_{\min},\mu'_{\min}) \notag \\
		=& \ln\left(\frac{C_{\eta'} + We^{-\gamma}}{C_{\eta'} + e^{-\gamma}/\mu'_{\min}}\right) 
		+\frac{C_{\eta'}}{C_\eta} \ln\left(\frac{C_{\eta} + We^{-\gamma}}{C_{\eta} + e^{-\gamma}/\mu_{\min}}\right) 
		+ \frac{C_\eta - C_{\eta'}}{C_\eta} \ln\left(\frac{C_{\eta}}{C_\eta'}\right) 
		- We^{-\gamma}\left(\frac{1}{C_\eta'} + \frac{C_{\eta'}}{C^2_\eta}\right) \notag\\
		=& O\left(\frac{C_{\eta'}}{C^2_\eta} +  \frac{C_\eta - C_{\eta'}}{C_\eta} \ln\left(\frac{C_{\eta}}{C_\eta'}\right)  \right).
	\label{eq:big_o}
	\end{align}
Equation~\ref{eq:big_o} follows from Lemma~\ref{lem:big_o}. Rearranging Equation~\ref{eq:kl_and_j} we get,
	\[
		J(\mu,\mu') < \frac{1}{C_\eta} D_{KL}(\eta,\eta') + O\left(\frac{C_{\eta'}}{C^2_\eta} +  \frac{C_\eta - C_{\eta'}}{C_\eta} \ln\left(\frac{C_{\eta}}{C_\eta'}\right)  \right),
	\]
and since $JS(\mu,\mu') \leq \frac{1}{4}J(\mu,\mu')$\cite{lin1991divergence}, 
	\[
		JS(\mu, \mu') \leq \frac{1}{4C_\eta} D_{KL}(\eta,\eta') + O\left(\frac{C_{\eta'}}{C^2_\eta} +  \frac{C_\eta - C_{\eta'}}{C_\eta} \ln\left(\frac{C_{\eta}}{C_\eta'}\right)  \right)
	\]
The remainder approaches 0 as $C_\eta, C_{\eta'} \rightarrow \infty$ and $\frac{C_{\eta'}}{C_\eta} \rightarrow 1$
\end{proof}

\subsection{Proof of Proposition~\ref{prop:1st_der}}
\label{sec:proof_prop_1st_der}
First we state a trivial observation, the proof of which is straightforward:
\begin{equation}
	\frac{\pd \eta_{kw}}{\pd \theta_l} = \frac{\pd}{\pd \theta_l}\frac{\betahat_{kw}}{\sum_k \theta_k \betahat_{kw}} = -\frac{\betahat_{kw} \betahat_{lw}}{(\sum_{k'} \theta_{k'} \betahat_{k'w})^2} = -\eta_{k'w}\eta_{lw}
\end{equation}
Therefore, we have 
\begin{equation}
	\frac{\pd \eta_{kw}}{\pd P_v} = \sum_l \frac{\pd \theta_l}{\pd P_v}\frac{\pd \eta_{kw}}{\pd \theta_l} = -\sum_l \left(\nabla_v \theta_l\right)\eta_{kw}\eta_{lw},
\label{eq:eta_der}
\end{equation}
which will be used later in the proof.

Implicitly differentiating Equation~\ref{eq:theta_limit} with respect to $P_w$,
\begin{align*}
	&\frac{\pd}{\pd P_v} \left\{\theta_k \left(1 - \sum_w \frac{P_{w} \betahat_{kw}}{\sum_k \theta_k \betahat_{kw}} \right) \right\} \\
	&= \frac{\pd}{\pd P_v} \left\{\theta_k \left(1 - \sum_w P_w \eta_{kw}\right)\right\} \\
	&= \left(\nabla_v \theta_k\right)\left(1 - \sum_w P_w \eta_{kw}\right) + \theta_k \left\{- \sum_w \left(\frac{\pd P_w}{\pd P_v}\eta_{kw} + P_w\frac{\pd \eta_{kw}}{\pd P_v}\right) \right\} \\
	&= \left(\nabla_v \theta_k\right)\left(1 - \sum_w P_w \eta_{kw}\right) + \theta_k \left\{ -\eta_{kv} - \sum_w P_w \left(-\sum_l \left(\nabla_v \theta_l\right)\eta_{kw}\eta_{lw}\right) \right\} \\
	&= \left(\nabla_v \theta_k\right)\left(1 - \sum_w P_w \eta_{kw}\right) + \theta_k \left\{ -\eta_{kv} + \sum_l \left(\nabla_v \theta_l\right)\left(\sum_w P_w\eta_{lw}\eta_{kw}\right) \right\},
\end{align*}
whereby obtaining Equation~\ref{eq:1st_der} in Proposition~\ref{prop:1st_der}

\subsection{Proof of Proposition~\ref{prop:2nd_der}}
\label{sec:proof_prop_2nd_der}
\begin{proof}
From Equation~\ref{eq:eta_der}, we get the following:
\begin{equation}
	\frac{\pd\eta_{kw}}{\pd P_u} = \sum_l \frac{\pd\theta_l}{\pd P_u}\frac{\pd\eta_{kw}}{\pd\theta_l} = \sum_l\left(\nabla_u\theta_l\right)\left(-\eta_{kw}\eta_{lw}\right) = \eta_{kw}\sum_l\left(\nabla_u\theta_l\right)\eta_{lw}
	\label{eq:diff_eta}
\end{equation}
which will be useful later in the proof.

Differentiating Equation~\ref{eq:1st_der} with respect to $P_u$,
\begin{equation}
	\nabla^2_{vu}\theta_k = 
	\nabla_u
	\left\{
		\underbrace{(\nabla_v \theta_k) \left(1 - \sum_w P_w \eta_{kw}\right)}_{\text{A}}
		+ 
		\underbrace{\theta_k \left(-\eta_{kv} + \sum_l (\nabla_v \theta_l) \left( \sum_w P_w \eta_{lw} \eta_{kw} \right) \right)}_{\text{B}}
	\right\}
\end{equation}
We differentiate A and B individually, each using chain rule.
\paragraph{Differentiating A}
\begin{align}
	&\nabla_u\left\{(\nabla_v \theta_k) \left(1 - \sum_w P_w \eta_{kw}\right)\right\} \notag\\
	&= \left(\nabla^2_{vu}\theta_k\right)\left(1-\sum_wPw\eta_{kw}\right) + \nabla_v\theta_k\left\{-\eta_{ku} - \sum_w P_w\frac{\pd\eta_{kw}}{\pd P_u}\right\} \notag\\
	&= \left(\nabla^2_{vu}\theta_k\right)\left(1-\sum_wPw\eta_{kw}\right) + \nabla_v\theta_k\left\{-\eta_{ku} - \sum_w P_w\left(-\sum_l\left(\nabla_u\theta_l\right)\eta_{kw}\eta_{lw}\right)\right\} \notag\\
	&= \left(\nabla^2_{vu}\theta_k\right)\left(1-\sum_wPw\eta_{kw}\right) + \nabla_v\theta_k\left\{-\eta_{ku} + \sum_w P_w\eta_{kw}\left(\sum_l\left(\nabla_u\theta_l\right)\eta_{lw}\right)\right\}
	\label{eq:diff_A}
\end{align}
\paragraph{Differentiating B}
\begin{align}
	&\nabla_u\left\{\theta_k \left(-\eta_{kv} + \sum_l (\nabla_v \theta_l) \left( \sum_w P_w \eta_{lw} \eta_{kw} \right) \right)\right\} \notag \\
	&= \left(\nabla_u\theta_k\right)\left(-\eta_{kv} + \sum_l (\nabla_v \theta_l) \left( \sum_w P_w \eta_{lw} \eta_{kw} \right) \right)
	+
	\theta_k\left\{-\frac{\pd\eta_{kv}}{\pd P_u} + \sum_l\left\{
		\left(\nabla_v\theta_l\right)\sum_w\frac{\pd}{\pd P_u}\left(P_w\eta_{kw}\eta_{lw}\right)
	\right\}\right\} \notag\\
	&= \left(\nabla_u\theta_k\right)\left(-\eta_{kv} + \sum_l (\nabla_v \theta_l) \left( \sum_w P_w \eta_{lw} \eta_{kw} \right) \right)
	+ \theta_k\eta_{kv}\sum_l\left(\nabla_u\theta_l\right)\eta_{lv} \notag\\
	&+ \theta_k\sum_l\left\{
		\left(\nabla^2_{vu}\theta_l\right)\sum_w P_w\eta_{kw}\eta_{lw}
		+ \left(\nabla_v\theta_l\right)\left\{
			\eta_{ku}\eta_{lu} - 2\sum_w P_w\eta_{kw}\eta_{lw}\left(
				\sum_w\left(\nabla_u\theta_m\right)\eta_{mw}
			\right)
		\right\}
	\right\},
	\label{eq:diff_B}
\end{align}
where in deriving Equation~\ref{eq:diff_B} we hafe used Equation~\ref{eq:diff_eta}. Combining Equations~\ref{eq:diff_A} and~\ref{eq:diff_B}, we obtain Equation~\ref{eq:2nd_der} in Proposition~\ref{prop:2nd_der}.
\end{proof}

\subsection{Proof of Theorem~\ref{thm:ranking}}
\label{sec:proof_thm_ranking}
First we provide two helper lemmas without proof. These lemmas and their proofs were introduced in \cite{cover2012elements}.

\begin{lemma}\emph{\cite{cover2012elements}}
	For a sequence $s\in Z^n$ and any probability distribution $Q$,
	\[
		Q(s) \leq 2^{-nH(P^s)}
	\]
	where $P_s$ is the relative frequency of the sequence $s$.
	\label{lem:prob_ent}
\end{lemma}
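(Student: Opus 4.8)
The plan is to prove the sharper \emph{exact} identity
\[
	Q(s) = 2^{-n\left(H(P^s) + KL(P^s \,\|\, Q)\right)}
\]
and then simply discard the non-negative divergence term. Throughout I take $Q$ to be the product (i.i.d.) distribution on $Z^n$ generated by a base distribution $Q$ on the alphabet $Z$, which is the setting in which the lemma is invoked; without this factorization assumption the product expansion below is invalid.

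First I would expand $Q(s)$ as a product over positions, $Q(s) = \prod_{i=1}^{n} Q(s_i)$, and then regroup the factors according to which alphabet symbol each position carries. Writing $N(a \mid s)$ for the number of occurrences of symbol $a$ in $s$, the relative frequency (type) satisfies $P^s(a) = N(a \mid s)/n$, so the product collapses to $Q(s) = \prod_{a \in Z} Q(a)^{\,nP^s(a)}$.

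Taking base-2 logarithms turns this product into the sum $\log_2 Q(s) = n \sum_{a} P^s(a) \log_2 Q(a)$. The one substantive algebraic step is to add and subtract $\log_2 P^s(a)$ inside the summand, which splits the expression as $n\sum_a P^s(a)\log_2\frac{Q(a)}{P^s(a)} + n\sum_a P^s(a)\log_2 P^s(a) = -n\,KL(P^s \,\|\, Q) - n\,H(P^s)$. Exponentiating recovers the exact identity, and the desired bound $Q(s) \le 2^{-nH(P^s)}$ follows immediately from $KL(P^s \,\|\, Q) \ge 0$.

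There is no genuine obstacle here, as the result is a standard consequence of the method of types (indeed it is cited from \cite{cover2012elements}). The only points needing care are the bookkeeping in the regrouping step, namely ensuring that symbols $a$ with $P^s(a)=0$ contribute a trivial factor and are absorbed by the convention $0 \log 0 = 0$, and the implicit product-distribution hypothesis that legitimizes factorizing $Q(s)$ over positions in the first place.
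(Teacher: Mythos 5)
Your proof is correct and is exactly the standard method-of-types argument (the exact identity $Q(s) = 2^{-n(H(P^s) + KL(P^s\|Q))}$ followed by nonnegativity of the divergence) from Cover and Thomas, which is precisely the source the paper defers to --- the paper states this lemma without proof. Your explicit flagging of the i.i.d.\ product hypothesis is a worthwhile clarification, since the paper's phrasing ``any probability distribution $Q$'' is false for arbitrary joint distributions on $Z^n$ (take $Q$ to be a point mass at $s$), and that hypothesis is indeed what the paper implicitly uses when it later applies the lemma to i.i.d.\ sequences.
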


\begin{lemma}\emph{\cite{cover2012elements}}
	For a finite alphabet $\mathcal{W}$ for which $|\mathcal{W}| = W$, 
	\[
		\sum_{x\in W^n} 2^{-n(H(P_x))} \leq (n+1)^W
	\]
	where $P_x$ is the relative frequency vector (or ``type'' in information theory) of $x$, and $H(P)$ is the Shannonn entropy of $P$.
	\label{lem:ent_sum_bound}
\end{lemma}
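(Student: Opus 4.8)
The plan is to use the classical \emph{method of types}: group the sum by empirical type, bound the size of each type class, and then bound the total number of types. First I would partition the length-$n$ sequences $\mathcal{W}^n$ into type classes. For each achievable type $P$ (an empirical distribution realized by some length-$n$ sequence), write $T(P) = \{x : P_x = P\}$ for the set of sequences sharing that type. Since every $x\in T(P)$ contributes the same summand $2^{-nH(P)}$, the sum collapses to
\[
	\sum_{x\in \mathcal{W}^n} 2^{-nH(P_x)} = \sum_{P} |T(P)|\, 2^{-nH(P)},
\]
where the outer sum ranges over all types $P$ of length $n$.

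The key step is to show that each term $|T(P)|\,2^{-nH(P)}$ is at most $1$. For this I would consider the product distribution induced by $P$ itself on $\mathcal{W}^n$. For any $x\in T(P)$, the probability of $x$ under this distribution equals exactly $2^{-nH(P)}$: this is the equality case of Lemma~\ref{lem:prob_ent}, obtained by taking the reference distribution $Q=P$ so that the Kullback--Leibler term between the type and $Q$ vanishes. Summing over $x\in T(P)$ shows that the total mass assigned to the type class is $|T(P)|\,2^{-nH(P)}$, and since any probability measure assigns total mass at most $1$ to an event, $|T(P)|\,2^{-nH(P)} \leq 1$. This is the heart of the argument and is equivalent to the standard type-class size bound $|T(P)| \leq 2^{nH(P)}$.

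Finally I would bound the number of distinct types. A type is determined by its count vector $(n_1,\dots,n_W)$ with $\sum_w n_w = n$ and each $n_w\in\{0,1,\dots,n\}$, so there are at most $(n+1)^W$ admissible count vectors, hence at most $(n+1)^W$ types. Combining the two bounds yields
\[
	\sum_{x\in \mathcal{W}^n} 2^{-nH(P_x)} = \sum_{P} |T(P)|\,2^{-nH(P)} \leq \sum_{P} 1 \leq (n+1)^W,
\]
which is the claimed inequality.

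The main obstacle is the per-type estimate $|T(P)|\,2^{-nH(P)}\le 1$; everything else is bookkeeping. The subtlety there is recognizing that the inequality of Lemma~\ref{lem:prob_ent} becomes an \emph{equality} precisely when the reference distribution is chosen to be the type $P$ itself, which is exactly what lets the ``mass at most one'' argument pin down the type-class size without any further estimation.
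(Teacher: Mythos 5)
Your proposal is correct, and it is essentially the argument the paper relies on: the paper states this lemma without proof, deferring to \cite{cover2012elements}, and your method-of-types argument (type-class decomposition, the per-class bound $|T(P)|\,2^{-nH(P)}\leq 1$ via the equality case of Lemma~\ref{lem:prob_ent} with $Q=P$, and the $(n+1)^W$ count of types) is exactly the standard proof given in that reference. No gaps; the one point worth stating explicitly in a write-up is that the equality $Q(x)=2^{-nH(P_x)}$ for $Q=P_x$ uses the convention $0\log 0=0$ when some symbols have zero frequency.
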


We now begin the proof of the theorem.
\begin{proof}
	Define a slight modification of $\Omega$, namely
	\[
		\tilde{\Omega}_j = \{(X_i,Y_{1,...,n}) | JS(\theta^X_i, \theta^Y_j) \geq \lambda\}.
	\]
	Then,
	\[
		\Omega_k \subset \tilde{\Omega}_j \forall k \neq j,
	\]
	and therefore
	\[
		\cup_{k\neq j} \Omega_k \subset \cup_{k \neq j} \left(\cap_{l \neq k} \tilde{\Omega}_l\right) \subset \tilde{\Omega}_j
	\]
	The probability of error then becomes
	\begin{align*}
		P_\Omega(error | H_j) =& \sum_{\cup_{k \neq j} \Omega_k} p(X_i, Y_1,...,Y_D) \\
		=& \sum_{\cup_{k \neq j} \Omega_k} p(X_i, Y_k) \prod_{l \neq k} p(Y_l) \\
		\leq& \sum_{\tilde{\Omega}_j} p(X_i, Y_j) \prod_{l \neq j} p(Y_l).
	\end{align*}
	Since $X_i$ and $Y_k$ are i.i.d. sequences of length $n$, one can consider a combined sequence of length $2n$ and apply Lemma~\ref{lem:prob_ent} to see that 
	\[
		P(X_i, Y_k) \leq 2^{-2n\left(H\left(\frac{1}{2}(X_i + Y_k)\right)\right)} \leq 2^{-n\left(H(X_i) + H(Y_k) + JS(X_i, Y_k)\right)}.
	\]
	Thus,
	\begin{align*}
		P_\Omega(error | H_j) \leq& \sum_{\tilde{\Omega}_j} 2^{-2n\left(H(X_i) + H(Y_k) + JS(X_i, Y_k)\right)} \prod_{l \neq k} 2^{H(Y_l)} \\
		=& \sum_{\tilde{\Omega}_j} 2^{-n(JS(X_i, Y_j))} 2^{-n(H(X_i) + \sum_l H(Y_l)))}  \\
		<& \sum_{\tilde{\Omega}_j} 2^{-n\epsilon(\lambda)} 2^{-n(H(X_i) + \sum_l H(Y_l)))} \\
		=& 2^{-n\epsilon(\lambda)} \sum_{\tilde{\Omega}_j} 2^{-n(H(X_i) + \sum_l H(Y_l)))} \\
		<& 2^{-n\epsilon(\lambda)} \sum_{\mathcal{W}^{D+1}} 2^{-n(H(X_i) + \sum_l H(Y_l)))}
	\end{align*}
	From Lemma~\ref{lem:ent_sum_bound} it follows that
	\begin{align*}
		P_\Omega(error | H_j)  <& 2^{-n\epsilon(\lambda)}\left((n+1)^W\right)^{D+1} \\
		=& 2^{-n\epsilon(\lambda)}(n+1)^{WD+W} \\
		<& 2^{-n \epsilon(\lambda)}\cdot 2^{\log(n+1)\cdot W(D+1)}
	\end{align*}
	and thus,
	\[
		-\frac{1}{n}\log P_\Omega\left(error \left|\right. H_j \right) > \epsilon(\lambda) - W(D+1) \frac{\log(n+1)}{n}.
	\]
	
	In the limit $n\rightarrow\infty$, we obtain Theorem~\ref{thm:ranking}.
\end{proof}

\end{document}